\def\eqref#1{equation~\ref{#1}}
\def\1{\bm{1}}
\DeclareMathAlphabet{\mathsfit}{\encodingdefault}{\sfdefault}{m}{sl}
\SetMathAlphabet{\mathsfit}{bold}{\encodingdefault}{\sfdefault}{bx}{n}
\newcommand{\softmax}{\mathrm{softmax}}
\DeclareMathOperator*{\argmax}{arg\,max}
\pgfplotsset{compat=1.16}
\newcommand{\Reals}{\mathbb{R}}
\newcommand{\Revision}[1]{{#1}}
\newcommand\redout{\bgroup\markoverwith
{\textcolor{red}{\rule[0.5ex]{2pt}{0.8pt}}}\ULon}
\newtheorem{definition}{Definition}
\newtheorem{proposition}{Proposition}
\newcommand{\model}{\mathcal{M}}
\newcommand{\weights}{\mathbf{W}}
\newcommand{\features}{\bm{f}}
\newcommand{\feature}{\bm{f}}
\newcommand{\inputs}{\bm{x}}
\newcommand{\scores}{\bm{s}}
\newcommand{\thresholds}{\bm{th}}
\title{SHELS: Exclusive Feature Sets for Novelty Detection and Continual Learning Without Class Boundaries}
\author{Meghna Gummadi{\normalfont,} David Kent{\normalfont,} Jorge A. Mendez{\normalfont, and} Eric Eaton\\
Department of Computer and Information Science\\
University of Pennsylvania\\
\texttt{\{meghnag,dekent,mendezme,eeaton\}@seas.upenn.edu} \\
}
\begin{document}

\maketitle

\begin{abstract}

While deep neural networks (DNNs) have achieved impressive classification performance in closed-world learning scenarios, they typically fail to generalize to unseen categories 
in dynamic open-world environments, in which the number of concepts is unbounded.  In contrast, human and animal learners have the ability to incrementally update their knowledge by recognizing and adapting to novel observations. In particular, humans characterize concepts via exclusive (unique) {\em sets} of essential features, which are used for both recognizing known classes and identifying novelty. Inspired by natural learners, we introduce a Sparse High-level-Exclusive, Low-level-Shared feature representation (SHELS) that simultaneously encourages learning exclusive sets of high-level features and essential, shared low-level features. The exclusivity of the high-level features enables the DNN to automatically detect out-of-distribution (OOD) data, while the efficient use of capacity via sparse low-level features permits accommodating new knowledge. 
The resulting approach uses OOD detection to perform class-incremental continual learning without known class boundaries. We show that using SHELS for novelty detection results in statistically significant improvements over state-of-the-art OOD detection approaches over a variety of benchmark datasets. 
Further, we demonstrate that the SHELS model mitigates catastrophic forgetting in a class-incremental learning setting, enabling a combined novelty detection and accommodation framework that supports learning in open-world settings.
\end{abstract}

\section{Introduction}

The successes of deep neural networks (DNNs) have occurred mainly in closed-world 
learning paradigms, where models are trained to handle fixed distributions of learning problems. To operate in realistic open-world environments, DNNs must be extended to support learning after deployment to adapt to new information and requirements
~\citep{liu2020learning}. This necessitates both the abilities to \textit{detect} novelty, as addressed by out-of-distribution (OOD) detection, 
and \textit{accommodate} this novelty into the learnt model, as addressed by continual or lifelong learning. However, these two problems have rarely been examined together. 
The work we present here bridges these two fields by developing a novel 
OOD detector that automatically triggers accommodation of new knowledge in a natural continual learning loop.

Work in continual learning 
focuses on learning tasks (or classes) consecutively, incrementally acquiring new behaviors over changing data distributions~\Citep{kirkpatrick2017overcoming, jung2020continual,yang2017deep, lee2019learning, lee2021sharing,shin2017continual, van2020brain}. 
However, most existing approaches must be explicitly told when new tasks are introduced.  In contrast, humans automatically detect when current percepts do not match known patterns~\citep{weizmann1971novelty}, 
spurring the refinement or introduction of new knowledge without explicit task or class boundaries~\citep{colombo1983infant, hunter1983effects}. In this way, novelty detection and accommodation are inherently intertwined in natural learners, with the learnt representations both characterizing known concepts and identifying their boundaries. \citet{tversky1977features} describes this process as a feature matching problem, whereby humans represent objects as a collection of essential features, enabling them to recognize novel instances via 1) the presence of a new feature, 2) the absence of known features, or 3) a new combination of known and unknown features.
Motivated by this idea, we explore how we can incorporate such representations into open-world continual learning.

The key insight of our work is to represent each class as an \emph{exclusive set} of high-level features within a DNN, while lower level features can be shared. 
The SHELS formulation realizes the exclusivity at the high levels by leveraging cosine normalization, and encourages sharing at the lower levels via group sparsity, leading to representations that consist of only essential features. Such structures identify combinations of features that encapsulate a unique {\em signature} for each class. Comparing the features triggered by OOD percepts with known class signatures enables the learner to detect novel classes and accommodate them, endowing it with the ability to learn continually without class boundaries.

The key contributions of this work include:
\begin{itemize}[itemsep=4pt,topsep=4pt,parsep=0pt,partopsep=0pt]
\vspace{-.5em}
\item We describe a novel mechanism for detecting OOD data within DNNs by representing concepts as exclusive combinations of reusable lower level sparse features. We call this representation SHELS.
\item We develop a continual learning algorithm that leverages these exclusive feature sets in tandem with sparsity to incrementally learn concepts, mitigating catastrophic forgetting of previously learnt knowledge.
\item We demonstrate a combined framework that supports class-incremental continual learning without the specification of explicit class boundaries, enabling the agent to operate effectively in an open-world setting through the integration of novelty detection and accommodation.
\end{itemize}

\section{Representing concepts as exclusive sets of higher level features}
\label{sec:ExclusiveFeatureSets}

It is well-established in  developmental psychology that children characterize concepts as collections of essential features, known as schemas~\citep{Piaget1926Language}. As an example, a child living with a dalmatian dog may learn to characterize it by its four-legs, fur, tail, black and white spots, and medium size. These features constitute the child's {\sf\small dog} schema. Critically, schemas can share features (enabling transfer between concepts), but are differentiated by their unique {\em sets} of essential features~\citep{tversky1977features}.  The child may also have a {\sf\small horse} schema that shares the four-legs, fur, and tail features, but is larger. 
When the child first encounters a cow, it may differentiate it as a new animal, since it has never before seen a large animal with four-legs, spots, and a tail. It is the exclusive {\em set} of essential features that enables the child to both recognize concepts and detect  that the cow is OOD. The child may mistakenly call the cow a horse, using their closest matching schema. With their parent's correction, the child can create a new {\sf\small cow} schema to accommodate the new information. Thus, both features and schemas are continually acquired  and refined over time.

From a mathematical standpoint, schemas represent a collection of  exclusive sets of features. Precisely, we can state:
\begin{definition}[Exclusive Sets]
Two sets $\mathcal{S}$, $\mathcal{T}$ are exclusive if and only if they each contain unique elements: $\mathsf{exclusive}(\mathcal{S},\mathcal{T}) \iff \mathcal{S} - \mathcal{T} \neq \emptyset \land \mathcal{T} - \mathcal{S} \neq \emptyset$.
{\normalfont (Note that this definition inherently precludes subset relationships and prevents $\mathcal{S}$ and $\mathcal{T}$ from being empty.)}
\end{definition}
\begin{definition}[Collection of Exclusive Sets]
A collection of sets $\mathcal{C}$ is exclusive if and only if every pair of sets within that collection is exclusive: $\mathsf{exclusive}(\mathcal{C}) \iff \forall \mathcal{S}, \mathcal{T}\in \mathcal{C} \ \ \mathcal{S}\neq\mathcal{T} \implies \mathsf{exclusive}(\mathcal{S},\mathcal{T})$.
\end{definition}

To manifest the idea of schemas in a representational space for machine learning, we embed each data instance $\bm{x} \in \Reals^d$ as a set of derived features $\bm{f} = g(\bm{x}) \in \Reals^k$. Our goal is for the non-zero entries of the embeddings  $\bm{f}$ of each class to form a collection of exclusive sets. {\em Orthogonality} between the vectors in the resulting vector space captures this notion of exclusive feature sets. In particular, a set of orthogonal feature embeddings $\bm{f}_1, \ldots, \bm{f}_C \in \Reals^k$ for each of $C$ classes forms a collection of exclusive sets (see Appendix~\ref{sec:ProofOfPropOrthogonalityCapturesExclusivity} for the proof).
In fact, orthogonality is more stringent than exclusivity, since non-orthogonal features can still constitute exclusive sets. However, our work shows empirically that orthogonal feature embeddings are sufficient for both characterizing classes and enabling OOD detection.

To impose orthogonality, we employ cosine normalization~\citep{luo2018cosine} and use cosine scores in the loss function when learning. 
This embeds the in-distribution (ID) classes along the surface of the unit ball, spacing them apart. Intuitively, this maximizes the chance that a new OOD class would have an embedding $g(\bm{x})$ 
with low cosine similarity with the other classes, facilitating OOD detection.
A continual learner utilizing this technique can then adapt its embedding to preserve orthogonality when accommodating the new class, permitting future OOD detection of other new classes.  Figure~\ref{fig:ExclusiveFeatureSets} illustrates this process, which we develop into a complete algorithm in Section~\ref{sec:Framework}.

\begin{figure}[b!]
    \centering
    \vspace{-0.75em}
    \includegraphics[width=0.74\textwidth,clip,trim=0.2in 1.6in 0.2in 1.58in]{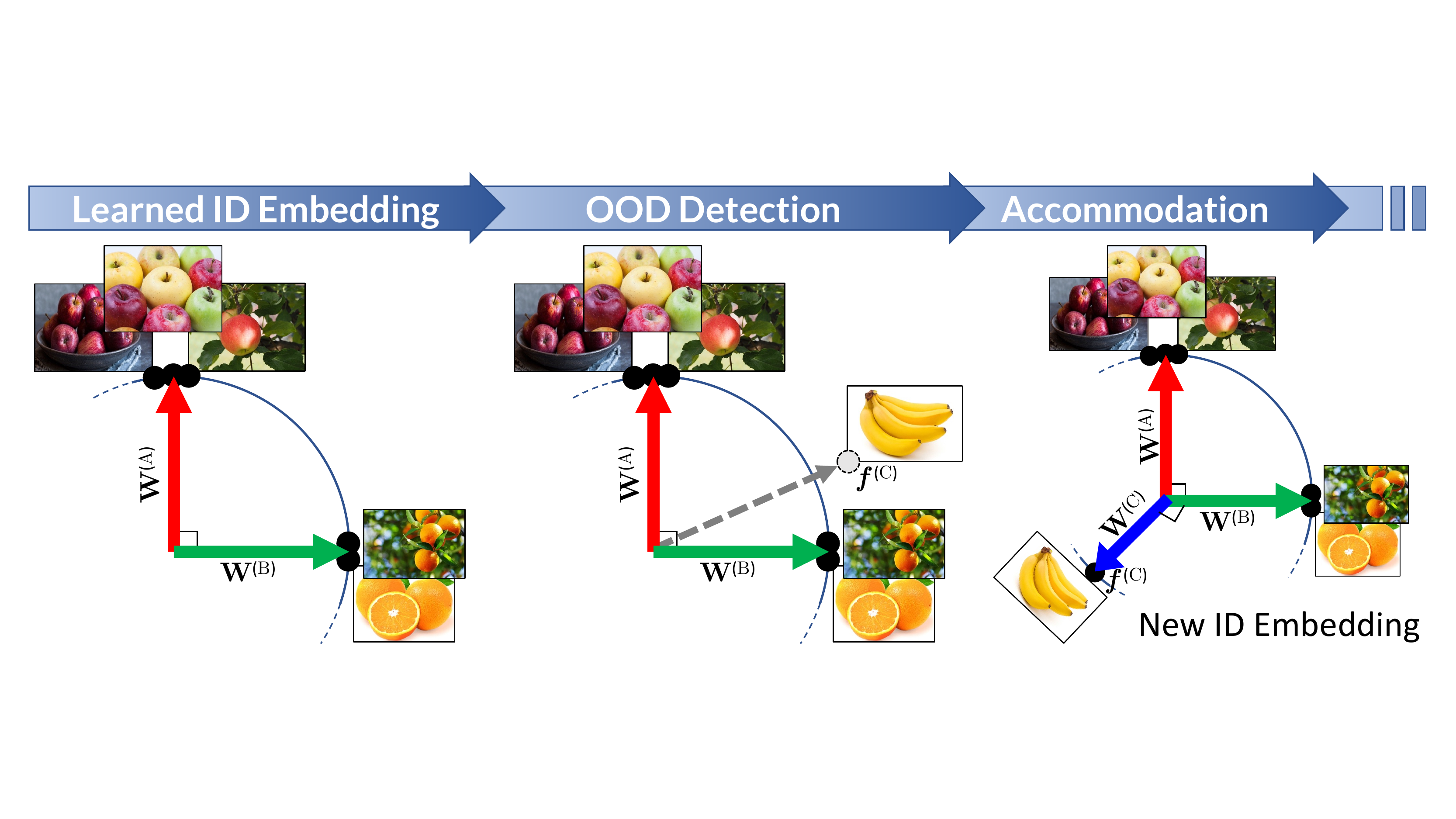}
    \caption{ID data is represented by exclusive sets of features, corresponding to orthogonal embeddings along the unit ball with broad separation of the classes. An OOD instance in the same embedding lies in a different region, enabling detection of the novelty and triggering accommodation of the embedding. \vspace{-0.5cm}}
    \label{fig:ExclusiveFeatureSets}
\end{figure}

\section{Related work}
\label{sec:relatedWork}
The notions of orthogonality and sparsity discussed above connect to work in novelty detection and continual learning. We explore those connections in the following paragraphs, and survey other (less) related literature in Appendix~\ref{sec:additionalRelatedWork}.

\textbf{Novelty Detection~~} Novelty detection has been studied in various forms, including  open-world recognition~\citep{bendale2015towards}, uncertainty estimation~\citep{lakshminarayanan2017simple}, and anomaly detection~\citep{andrews2016transfer}. A popular baseline proposed by \citet{hendrycks2016baseline} thresholds the softmax scores output by a network to detect OOD inputs. Similarly, OpenMax~\citep{bendale2016towards} compares the distance of the penultimate layer activation patterns of a sample to those of ID data.
Such distance-based methods fall short when ID and OOD data are similar, which is a challenging case that we explicitly consider for SHELS. ODIN~\citep{liang2017enhancing} complements thresholding with input pre-processing and temperature scaling to compute more-effective softmax scores. The method by \citet{lee2018simple} models every activation layer learnt using ID data as a class-conditional Gaussian distribution, and defines confidence scores using the Mahalanobis distance with respect to the nearest class-conditional distribution. However, both of these approaches assume access to sample OOD data to tune their hyperparameters, which is unrealistic in open-world settings. \citet{techapanurak2020hyperparameter} propose to compute the softmax scores over the temperature-scaled cosine similarity, outperforming both ODIN and Mahalanobis while making use of only ID data for hyperparameter tuning. Our SHELS approach also uses cosine similarity, but unlike existing formulations, SHELS encourages the embeddings of ID classes to be orthogonal (and therefore exclusive), which we demonstrate improves OOD detection.

\textbf{Continual Learning~~}
The major challenge in continual learning is avoiding catastrophic forgetting, whereby learning new knowledge interferes with previously known concepts~\citep{mccloskey1989catastrophic}. A variety of approaches have been proposed to mitigate this problem, from replay-based methods that use samples of previous task data to retain performance~\citep{lopez2017gradient,chaudhry2018efficient}, to regularization-based strategies that selectively update weights to avoid forgetting~\citep{kirkpatrick2017overcoming,zenke2017continual}. The regularization-based method by \citet{jung2020continual} uses a combination of sparsity and weight update penalties based on the importance of nodes in the network. We take a similar approach, but use a more structured form of sparsity along with high-level exclusivity to enable both continual learning and OOD detection. Several recent continual learning methods use orthogonality to avoid interference between tasks, embedding each task model on an orthogonal subspace~\citep{chaudhry2020continual,saha2021gradient,deng2021flattening, duncker2020organizing}. We explore a complementary use of orthogonality, to create exclusive feature sets that enable OOD detection during continual learning. Notably, the only prior work studying both OOD detection and continual learning reuses previously learnt representations to accommodate new classes~\citep{lee2018simple}, limiting its applicability to settings where the ID representations are sufficient to handle OOD data. \Revision{Moreover, they do not demonstrate the use of novelty detection and accommodation as an integrated system as we do.}

\vspace{-0.25cm}
\section{Framework}
\label{sec:Framework}
\vspace{-0.25cm}

Our approach trains DNNs that have 1) high-level feature sets that are exclusive to each class and 2) low-level features that are shared among classes. We call this the {\em SHELS} representation---a Sparse High-level-Exclusive  Low-level-Shared representation.  SHELS representations promote effective class discrimination and form capacity-efficient models, enabling both novelty detection and accommodation. 
Learning features this way differs substantially from typical DNN learning, which does not optimize for network efficiency and therefore exploits the full representational capacity, leading to representations that are redundant, correlated, and sensitive to the input distribution.  
Our approach learns SHELS representations using a combination of sparsity regularization and cosine normalization, enabling DNNs to learn features that are non-correlated, essential, and (at higher levels) exclusive to each class. 
We exploit this representation to detect new classes as  novel activation patterns of high-level features, enabling OOD detection. We then  accommodate the necessary information for new classes into the DNN by identifying and updating only those nodes that are unimportant to previously learnt classes, thereby mitigating catastrophic forgetting for continual learning.

\subsection{Exclusive high-level feature sets with shared low-level features}

We next develop the regularization terms and overall loss function of our approach.

\textbf{Notation~~} For a neural network of $L$ hidden layers, let $\weights^{l}$ denote the weight matrix for layer $l \in {1,\ldots,L}$, with all weight matrices collected together into a tensor of parameters $\weights$.
The vector of outgoing weights from node $n_g$ in layer $l$, which we denote a \textit{group}, is given by $\weights_{[g, :]}^{l}$, while the vector of incoming weights to node $n_i$ is given by $\weights_{[:, i]}^{l}$. $\Omega (\weights^{l})$ is a regularization term on the network weights at layer $l$ and $\rho(n)$ is the average activation of a node $n$ over a dataset. The features produced by layer $l$ of the network on input $\inputs$ are represented by $\features^{l}(\inputs)$ and $\feature_{n}(\inputs)$ denotes the feature activation at node $n$. We use $c \in {1,\ldots,C}$ to denote seen ID classes and $D_{1:C}$ to denote the dataset over all seen classes. In continual learning formulations, we use $\weights^{(c)}$ to denote the weight tensor trained up to the $c$-th class.

\subsubsection{Exclusive high-level features}
\label{sec:exlusive-high}

\definecolor{color1}{rgb}{0.00392157, 0.45098039, 0.69803922} 
\definecolor{color2}{rgb}{0.8254902 , 0.78235294, 0.2       } 
\definecolor{color3}{rgb}{0.83529412, 0.36862745, 0.   } 
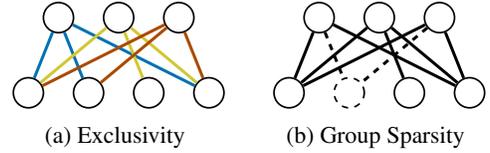
\begin{wrapfigure}{r}{0.4\textwidth}
\begin{minipage}[t]{0.4\textwidth}
    \vspace{-2.75em}%
    \centering
    \subfloat[Exclusivity\label{fig:exclusivity}] {
        \scalebox{0.8}{
        \begin{tikzpicture}[main/.style={draw, thick, circle, minimum size=0.5cm, scale=1.0}]
            \node[main] at (0.5,1.25) (S1) {};
            \node[main] at (1.5,1.25) (S2) {};
            \node[main] at (2.5,1.25) (S3) {};
            \node[main] at (0,0) (D1) {};
            \node[main] at (1,0) (D2) {};
            \node[main] at (2,0) (D3) {};
            \node[main] at (3,0) (D4) {};
            \draw[line width=0.5mm,color1] (S1) -- (D1);
            \draw[line width=0.5mm,color1] (S1) -- (D2);
            \draw[line width=0.5mm,color1] (S1) -- (D4);
            \draw[line width=0.5mm,color2] (S2) -- (D1);
            \draw[line width=0.5mm,color2] (S2) -- (D3);
            \draw[line width=0.5mm,color2] (S2) -- (D4);
            \draw[line width=0.5mm,color3!80!black] (S3) -- (D1);
            \draw[line width=0.5mm,color3!80!black] (S3) -- (D2);
            \draw[line width=0.5mm,color3!80!black] (S3) -- (D4);
        \end{tikzpicture}
        }
    }
    \hfill
    \subfloat[Group Sparsity\label{fig:groupSparsity}] {
        \scalebox{0.8}{
        \begin{tikzpicture}[main/.style={draw, thick, circle, minimum size=0.5cm, scale=1.0}]
            \node[main] at (0.5,1.25) (S1) {};
            \node[main] at (1.5,1.25) (S2) {};
            \node[main] at (2.5,1.25) (S3) {};
            \node[main] at (0,0) (D1) {};
            \node[main,dashed] at (1,0) (D2) {};
            \node[main] at (2,0) (D3) {};
            \node[main] at (3,0) (D4) {};
            \draw[line width=0.5mm] (S1) -- (D1);
            \draw[line width=0.5mm,dashed] (S1) -- (D2);
            \draw[line width=0.5mm] (S1) -- (D4);
            \draw[line width=0.5mm] (S2) -- (D1);
            \draw[line width=0.5mm] (S2) -- (D3);
            \draw[line width=0.5mm] (S2) -- (D4);
            \draw[line width=0.5mm] (S3) -- (D1);
            \draw[line width=0.5mm,dashed] (S3) -- (D2);
            \draw[line width=0.5mm] (S3) -- (D4);
        \end{tikzpicture}
        }
    }
\caption{ (a) Exclusivity encourages each upper-level node to chose an exclusive set of lower level nodes (the matching colored edges). (b)~Group Sparsity eliminates neurons that are redundant or not shared by multiple upper-level nodes (the dashed-edge neuron). \vspace{-3em}}
\label{fig:Group_Excl_sparsity}
\end{minipage}
\end{wrapfigure}
 To impose exclusivity in the high-level features (Figure~\ref{fig:exclusivity}), we employ cosine normalization in the last layer of the neural network, in place of the usual dot product. Typical classification networks predict class scores for a given input $\inputs$ by applying the softmax function to the output of the last linear layer: \mbox{$\softmax(\weights^{L}\features^{L-1}(\inputs) + \bm{b}^{L})$.} We replace this linear transformation with the cosine of the angle between the weights $\weights_{[:,c]}^{L}$ for class $c$ and the features $\features^{L-1}(\inputs)$ (i.e., the cosine similarity) to compute class scores:
\begin{equation}
    \label{equ:CosineSimilarity}
    \cos^{(c)}(\weights, \inputs) = \frac{{\weights_{[:,c]}^{L}}^ {\!\!\!\!\top}\features^{L-1}(\inputs)}{\|\weights_{[:,c]}^{L}\|
     \|\features^{L-1}(\inputs)\|}\enspace.
\end{equation}
Using the raw cosine similarity scores as the class scores  encourages the weights of the last layer to be sensitive to \emph{which} features are activated instead of the degree of the feature activations. Therefore, training the network to optimize for the raw cosine scores results in orthogonal---and consequently exclusive---high-level features per class.
\Revision{We empirically demonstrate that cosine normalization leads to exclusive features sets, and compare the exclusivity in our method to two baseline methods: the standard CNNs used by most other novelty detection approaches, and the method of \citet{techapanurak2020hyperparameter}, which uses a form of cosine similarity. See Appendix~\ref{sec:CosineNormalizationInducesExclusiveSets} for this analysis.} 


As an alternative, we explored the exclusive sparsity regularizer of \citet{Zhou2010ExclusiveLF}, which encourages competition among features selected for a class. We found it to be less effective than cosine normalization (see Appendix~\ref{sec:ExclusiveSparsityFeatureSets}), since it encourages {\em individual} features to be exclusive to a class, in contrast to {\em sets} of features being exclusive to a class.

\subsubsection{Shared lower level features}

Sparse regularizers are primarily used to reduce the complexity of DNNs by zeroing out certain weights. However, we can also use sparse regularizers to 
enable continual learning.
For class-incremental continual learning, we need the ability to accommodate new information into the network without disrupting any knowledge learnt for previous classes. We accomplish this by creating sparsity in the network during training, allowing us to maintain unused nodes for learning future classes. We impose such sparsity in a structured fashion using the group Lasso regularizer defined by \citet{yuan2006model} and adapt it to DNNs~\citep{yoon2017combined, wen2016learning}, promoting the removal of redundancies by eliminating correlated features while also encouraging the sharing of features:
    \begin{equation}
        \Omega_{G} (\weights^{l}) = \sum {\left\lVert \weights_{g}^{l} \right\rVert}_{2} = \sum_{g}\sqrt{\sum_{i} \left(\weights_{[g,i]}^{l}\right)^{2}}\enspace.
    \end{equation}
Group Lasso promotes inter-group sparsity; that is, it eliminates groups (nodes) in the lower level that are redundant and are not shared among multiple high-level groups, as illustrated in Figure~\ref{fig:groupSparsity}.

\subsubsection{Layered sparse representation learning}
\label{sec:shells_loss}
To learn SHELS representations, we combine group sparsity at different layers with cosine normalization at the last layer of the network by optimizing for the following loss:
\begin{equation}
     \label{equ:Loss}
    \mathcal{L}(\weights) = \underbrace{\frac{1}{|D_{1:C}|}\sum_{c=1}^{C}\sum_{k=1}^{|D_c|} -\log\Big(\cos^{(c)}(\weights, \inputs_k)\Big)}_{\mathcal{L}_{CE}} + \ \alpha\sum_{l}\Big(1-\mu^{(l)}\Big)\biggl(\underbrace{\sum_{g}{\left\lVert \weights_{[g, :]}^{l} \right\rVert}_{2} \biggr)}_{\Omega_{G}}\enspace,
\end{equation}where $\mathcal{L}_{CE}$ is the standard cross-entropy loss applied to the cosine similarity scores of Equation~\ref{equ:CosineSimilarity} and $\Omega_{G}$ is the group sparsity regularizer. The parameter $\alpha$ controls the amount of regularization, and by setting $\mu^{(l)} = \frac{l-1}{L-1}$ we can interpolate between more sharing at the lower layers and less sharing (and therefore more exclusivity) at the higher layers, with $\mu^{(1)}=0$ and $\mu^{(L)}=1$ at the outermost layers.  \Revision{We provide an empirical analysis of the effects of group sparsity regularization on OOD detection by varying the $\alpha$ and $\mu^{(l)}$ hyperparameters in Appendix~\ref{sec:additionalOODExp}.}

\newpage

\subsection{Novelty detection}
\label{sec:novelty-detection}


\begin{wrapfigure}{r}{0.43\textwidth}
\vspace{-4.5em} 
\begin{minipage}{\linewidth}
\begin{algorithm}[H]
\footnotesize
\caption{Novelty Detection Algorithm}\label{alg:OODDetection}
\begin{algorithmic}[1]
\Procedure{initTraining} {$D_{1:C},\mu, \alpha, T$}
 \State Randomly initialize model $\model$ parameters, $\weights$
 \State Define loss $\mathcal{L}(\weights)$ as in Equation~\ref{equ:Loss}
 \For{$t = 1,\ldots, T$} \Comment{Adam optimizer}
    \State $\weights_{t} \gets \weights_{t-1} - \nabla \mathcal{L}(\weights_{t-1})$
\EndFor
\State $\thresholds \gets$ \texttt{computeTh($\model, D_{1:C}$)}
\State \textbf{Output:} $\model,$ $\thresholds$    \Comment{Trained model}
\EndProcedure

\Procedure{detect} {$X, \model$}
\State $ \scores \gets$ \texttt{computeScore($X,\model$)}
\State $ s^* \gets \max(\scores)$
\State $ c^* \gets \argmax(\scores)$
\If{$s^* > \thresholds_{c^*}$}
    \State \textbf{Output:} class $c^*$
\Else
    \State \textbf{Output:} $novel$
\EndIf
\EndProcedure
\end{algorithmic}
\end{algorithm}
\end{minipage}
\vspace{-1.0em} 
\end{wrapfigure}
With the learnt model composed of SHELS features, we exploit the exclusivity in the high-level feature sets with respect to each known class to detect novel classes. Since each class is represented by an exclusive set of high-level features (their {\em signature}), a novel class is detected when the high-level feature activations differ from the signatures of known (ID) classes. Concretely, we define the signature of class $c$ as the mean high-level features over all seen data for that class.

Our novelty detector (Algorithm~\ref{alg:OODDetection}) first trains a model $\model$ for $T$ epochs to learn SHELS features using the ID training data $D_{1:C}$ (lines 2-5). For an input $\inputs$, we compute a class similarity score $\scores_c$ for each ID class $c \in1,\ldots, C $ via:
\begin{equation}
    \label{equ: ClassSimScore}
    \scores_{c}(\inputs) = {\weights_{[:,c]}^{L}}^ {\!\!\!\!\top}\features^{L-1}(\inputs)\enspace,
\end{equation}
which measures  similarity between the high-level features of the sample and the signature of known ID class $c$.
We then compute thresholds $\thresholds$ for each ID class $c$ as the mean of $\scores_c$ minus one standard deviation, computed over the training data of each class that the model correctly classifies 
(line 6). At inference time (lines 8-15), we compute the similarity score $\scores(\inputs)$ with respect to each class $c$ and find the maximum over the classes' scores $s^*(\inputs)=\max\scores(\inputs)$ and the corresponding class $c^*=\argmax\scores(\inputs)$. Then, $\inputs$ is predicted as belonging to class $c^*$ if its similarity score $s^*(\inputs)$ is greater than the threshold for the class, $\thresholds_{c^*}$; otherwise, it is identified as \textit{novel}.
\Revision{Note that the method of \citet{techapanurak2020hyperparameter} uses  normalized cosine similarities when computing the class scores which can amplify arbitrarily small feature activations, resulting in spurious detections. Our approach of computing class scores using Equation~\ref{equ: ClassSimScore} addresses this shortcoming, resulting in better novelty detection performance as shown by our results in Section~\ref{sec:detect-exp}}.



\subsection{Novelty accommodation}
\label{sec:novelty-accomm}

\newcommand{\circlesize}{0.4cm}
\newcommand{\childspace}{0.25}
\newcommand{\parentspace}{0.75}
\newcommand{\diagspace}{0.5} 
\newcommand{\legendspace}{0.08}
\newcommand{\legendtext}{\scriptsize}
\begin{wrapfigure}{r}{0.52\textwidth}
\begin{minipage}[t]{0.52\textwidth}
    \centering
    \vspace{-3.em}
    \scalebox{1.0}{
    \begin{tikzpicture}[main/.style={draw, thick, circle, minimum size=\circlesize}]
        \node[main,fill=color1] at (0,0) (C1) {};
        \node[main,fill=color1] (C2) [right=\childspace of C1] {};
        \node[main] (C3) [right=\childspace of C2] {};
        \node[main,fill=color1] (C4) [right=\childspace of C3] {};

        \draw[white] (C1) -- (C2) node[midway] (C12) {};
        \node[main] (B1) [above=\parentspace of C12] {};
        \node[main,fill=color1] (B2) [right=\childspace of B1] {};
        \node[main,fill=color1] (B3) [right=\childspace of B2] {};
        \path[thick] (B1) edge (C1) edge (C2) edge (C3) edge (C4);
        \path[thick] (B2) edge (C1) edge (C2) edge (C3) edge (C4);
        \path[thick] (B3) edge (C1) edge (C2) edge (C3) edge (C4);

        \draw[white] (B1) -- (B2) node[midway] (B12) {};
        \node[main] (A1) [above=\parentspace of B12] {};
        \node[main,fill=color1] (A2) [right=\childspace of A1] {};
        \path[thick] (A1) edge (B1) edge (B2) edge (B3);
        \path[thick] (A2) edge (B1) edge (B2) edge (B3);
        \draw[white] (A1) -- (A2) node[midway] (d1) {};
        \node (d1t) [above=0.15 of d1] {after learning $C$};

        \node[main,fill=color1] (C12) [right=\diagspace of C4] {};
        \node[main,fill=color1] (C22) [right=\childspace of C12] {};
        \node[main] (C32) [right=\childspace of C22] {};
        \node[main,fill=color1] (C42) [right=\childspace of C32] {};

        \draw[white] (C12) -- (C22) node[midway] (C122) {};
        \node[main] (B12) [above=\parentspace of C122] {};
        \node[main,fill=color1] (B22) [right=\childspace of B12] {};
        \node[main,fill=color1] (B32) [right=\childspace of B22] {};
        \path[line width=0.5mm,densely dashed] (B12) edge (C12) edge (C22) edge (C32) edge (C42);
        \path[line width=0.5mm] (B22) edge[color1] (C12) edge[color1] (C22) edge[color3] (C32) edge[color1] (C42);
        \path[line width=0.5mm] (B32) edge[color1] (C12) edge[color1] (C22) edge[color3] (C32) edge[color1] (C42);

        \draw[white] (B12) -- (B22) node[midway] (B122) {};
        \node[main] (A12) [above=\parentspace of B122] {};
        \node[main,fill=color1] (A22) [right=\childspace of A12] {};
        \path[line width=0.5mm,densely dashed] (A12) edge (B12) edge (B22) edge (B32);
        \path[line width=0.5mm] (A22) edge[color3] (B12) edge[color1] (B22) edge[color1] (B32);
        \draw[white] (A12) -- (A22) node[midway] (d12) {};
        \node (d12t) [above=0.15 of d12] {learning $C+1$};

        \node[main,white,minimum size=0.15cm] (L3) [right=0.60 of B32] {}; 
        \draw[line width=0.5mm,color1] (L3.east) -- (L3.west);
        \node[main,minimum size=0.15cm] (L2) [above=\legendspace of L3] {};
        \node[main,fill=color1,minimum size=0.15cm] (L1) [above=\legendspace of L2] {};
        \node[main,white,minimum size=0.15cm] (L4) [below=\legendspace of L3] {};
        \draw[line width=0.5mm,color3] (L4.east) -- (L4.west);
        \node[main,white,minimum size=0.15cm] (L5) [below=\legendspace of L4] {};
        \draw[line width=0.5mm,densely dashed] (L5.west) -- (L5.east);

        \node (L1d) [right=0.15 of L1] {\legendtext important nodes};
        \node (L2d) [right=0.15 of L2] {\legendtext unimportant nodes};
        \node (L3d) [right=0.15 of L3] {\legendtext frozen $W$};
        \node (L4d) [right=0.15 of L4] {\legendtext frozen with $W=0$};
        \node (L5d) [right=0.15 of L5] {\legendtext learnable weights};
    \end{tikzpicture}}
    \caption{Novelty accommodation via selective weight updates. Important nodes (blue) for classes $1\ldots C$ are identified by $\rho^{(C)}(n) > 0$. Incoming weights to important nodes (blue/orange lines) are frozen (penalized from updates by $\Omega_{WP}$) when learning class $C+1$. To avoid negative transfer, orange weights from unimportant nodes are frozen to 0.}
    \label{fig:weight_freezing}
    \end{minipage}
    \vspace{-0em}
\end{wrapfigure}
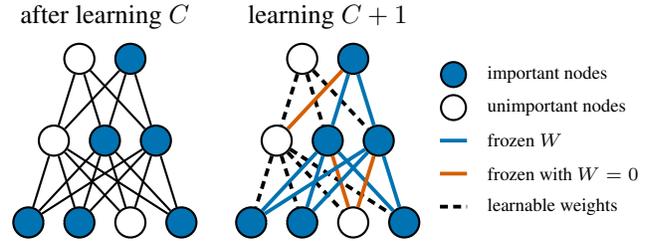
Following the detection of a novel class, our framework updates the SHELS representation to accommodate it, which involves learning new knowledge relevant to the novel class while preserving previously learnt knowledge to prevent catastrophic forgetting~\citep{mccloskey1989catastrophic}.
Since group sparsity has the effect of eliminating certain nodes from the network, 
these unimportant nodes remain available for 
incorporating new knowledge.
We measure a node's importance as the average activation of the node over the entire training data; 
\citet{jung2020continual} use a similar method to compute node importance. Precisely, we measure the importance of a node $n$ up to class $C$ as:
\begin{equation}
     \label{equ:node_imp}
        \rho^{(C)}(n) =  \frac{1}{|D_{1:C}|}\sum_{k=1}^{|D_{1:C}|}\feature_{n}(\inputs_{k})\enspace.
        \vspace{-0.5em}
\end{equation}

Our approach focuses the learning of the new class on unimportant nodes, while ensuring that important nodes are not modified. 
In particular, we target two types of modifications that could affect important nodes:
\textit{model drift} and \textit{negative knowledge transfer}.
Model drift (illustrated in Appendix~\ref{sec:SupplementalIllustrations},  Figure~\ref{fig:catastrophic_forgetting:model_drift}) occurs when the incoming weights of important nodes are changed during the learning of the new class. To avoid model drift, we penalize updates to these incoming weights. On the other hand, negative knowledge transfer occurs when an unimportant node is updated and used as input to an important node (see Appendix~\ref{sec:SupplementalIllustrations}, Figure~\ref{fig:catastrophic_forgetting:negative_knowledge_transfer}). To avoid negative knowledge transfer,
we set the weights connecting unimportant nodes (i.e., $\rho^{(C)}(n)=0$) to important nodes to zero and restrict their updates, thereby stabilizing important nodes. The combined approach for avoiding forgetting is illustrated in Figure~\ref{fig:weight_freezing}.

To selectively update weights, we add a weight penalty regularizer $\Omega_{WP}$ to Equation~\ref{equ:Loss} to penalize the incoming weights $\weights_{[:, i]}$ of node $n_i$ from being updated, weighted by the node's importance $\rho^{(C)}(n_i)$. This gives us the loss:
\begin{equation}
     \label{equ:Contll_Loss}
    \mathcal{L}(\weights^{(C+1)}) = \mathcal{L}_{CE} + \alpha\sum_{l}\sum_{g}\biggl((1-\mu){\left\lVert \weights_{[g, :]}^{(C+1),l} \right\rVert}_{2}\biggr)\enspace
    + \beta\sum_{l}\sum_{i}\underbrace{{\rho^{(C)}(n_{i})\left\lVert \weights_{[:, i]}^{(C+1),l} -  \weights_{[:, i]}^{(C),l} \right\rVert}_{2}}_{\Omega_{WP}}\enspace.
    \vspace{-1em}
\end{equation}

\begin{wrapfigure}{r}{0.43\textwidth}
\vspace{-1.0em}
\begin{minipage}{\linewidth}
\begin{algorithm}[H]
\footnotesize
\caption{Novelty Accommodation Algorithm}\label{alg:ContinualLearning}
\begin{algorithmic}[1]
\State $\rho^{(C)}(n) \gets $ \texttt{computeNodeImp($D_{1:C}, \model$)}
\Procedure{accomm}{$D_{C+1},\model, \alpha, \beta, \mu, T, \rho^{(C)}(n)$}
\State Define loss $\mathcal{L}(\weights^{(C+1)},\weights^{(C)})$ as in Equation~\ref{equ:Contll_Loss}
\For{$t=1,\ldots,T$} \Comment{Adam optimizer}
    \State $\weights_{t}^{(C+1)}\!\!\gets\!\! \weights_{t-1}^{(C+1)}-\nabla\mathcal{L}(\weights_{t-1}^{(C+1)}\!, \weights^{(C)})$
\EndFor
\State \textbf{Output:} $\model$ \Comment{Updated model}
\EndProcedure
\end{algorithmic}
\end{algorithm}
\end{minipage}
\vspace{-2em}
\end{wrapfigure}
This ensures that the new class is primarily learnt in unimportant nodes, without modifying important nodes. The group sparsity and cosine normalization terms further ensure the ability to detect and accommodate future novel classes.

Our novelty accommodation method (Algorithm~\ref{alg:ContinualLearning}) first computes the node importance $\rho^{(C)}(n)$ for ID classes of the SHELS model $\model$ trained on dataset $D_{1:C}$ (line 1). Then, \texttt{accomm()} uses  $\rho^{(C)}(n)$ to accommodate the novel class by updating $\model$ over $T$ epochs, optimizing Equation~\ref{equ:Contll_Loss} on data $D_{C+1}$ (lines 3-6).

\subsection{Novelty detection and accommodation for continual learning}
\label{sec:detect_accomm}

We combine the novelty detection from Section~\ref{sec:novelty-detection} with the novelty accommodation from Section~\ref{sec:novelty-accomm}, resulting in a joint framework that supports class-incremental novelty accommodation without the specification of class boundaries by performing novelty detection. Figure~\ref{fig:framework_arch} describes the framework and its components.

The continual learning agent first learns the SHELS model $\model$ over the ID data $D_{1:C}$, and computes the thresholds $\thresholds$ and the node importance values $\rho(n)$. The model is then deployed to a detection phase, using the approach from Section~\ref{sec:novelty-detection} to perform inference on unseen data $D_{\text{test}}$ in a dynamic environment, which may contain ID or OOD data. If it detects $D_{\text{test}}$ as OOD with a confidence greater than a preset threshold, the agent switches to novelty accommodation. During accommodation, the agent receives additional data of the newly detected class $D_{C+1}$ and accommodates the novel class by updating previously-unimportant nodes using the approach from Section~\ref{sec:novelty-accomm}. Critically, the agent expands the representation learnt for the previous classes $1,\ldots,C$ to incorporate the new class $C+1$ while maintaining orthogonality, ensuring that future classes can still be detected as OOD, as illustrated in Figure~\ref{fig:ExclusiveFeatureSets}. Once the new class has been accommodated, the agent switches back to deployment mode, and the process repeats.

\begin{figure}[h!]
\vspace{-.5em}
\centering
   \includegraphics[width=.9\textwidth,clip,trim=1.3in 2.75in 1.3in 2.75in]{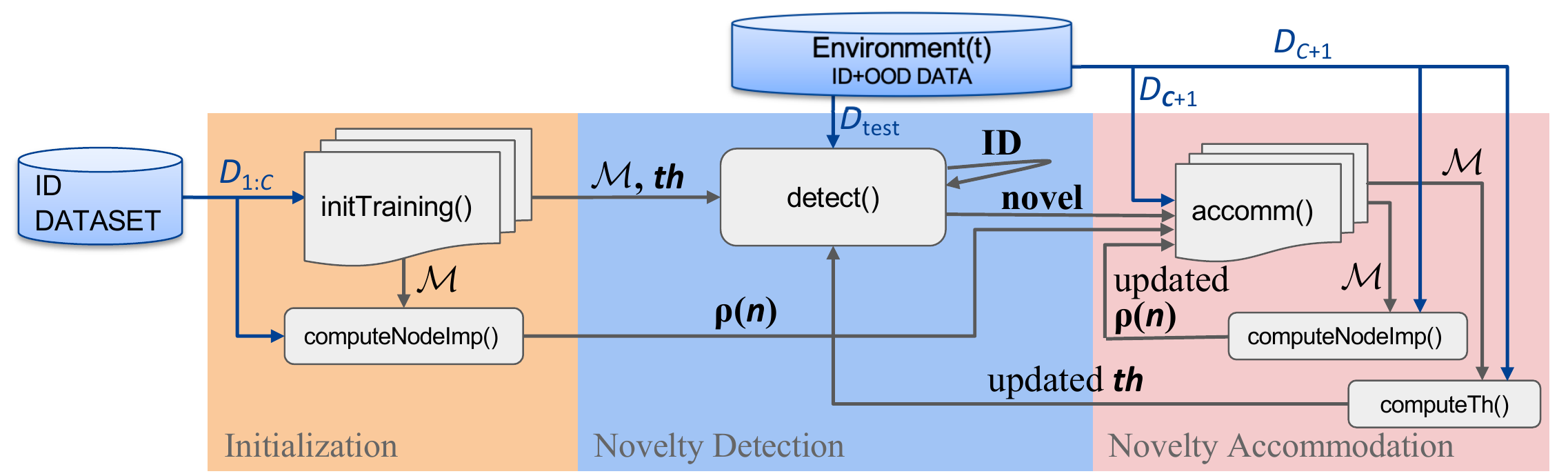}
\caption{Our framework for novelty detection and accommodation without class boundaries.}
\label{fig:framework_arch}
\end{figure}

\section{Experiments and results}

This section evaluates the effectiveness of our framework  separately for both novelty detection and novelty accommodation. We then demonstrate how our SHELS representation and approach enable an iterative novelty detection and accommodation process in a class-incremental continual learning scenario without labeled class boundaries.\footnote{Source code supporting re-creation of all experiments can be found at \url{https://github.com/Lifelong-ML/SHELS}.}

\textbf{Datasets~~}  We evaluate our framework using MNIST~\citep{gradient1998lecun}, FashionMNIST~(FMNIST; \citealp{xiao2017fashion}), CIFAR10~\citep{krizhevsky2009learning}, SVHN~\citep{netzer2011reading}, and GTSRB~\citep{Stallkamp2012}.  MNIST, FMNIST, CIFAR10, and SVHN each consist of 10 classes, while GTSRB consists of 43 classes.

\textbf{Networks and training details~~} All experiments use DNN architectures that consist of ReLU convolutional layers followed by ReLU fully-connected layers and cosine normalization. For MNIST, FMNIST, CIFAR10 (within-dataset), SVHN, and GTSRB, the architecture consists of six convolutional layers followed by one or more linear layers and a final cosine normalization layer. \Revision{Across-datasets experiments with CIFAR10 as the ID classes use VGG16 pretrained on ImageNet, replacing the final fully-connected layer with a cosine normalization layer. We use the Adam optimizer for all datasets except for CIFAR10 across datasets, for which we use SGD as it achieves higher ID performance}. Implementation details, architectures, and hyperparameters can be found in Appendix~\ref{sec: implementation_details}.  Additionally, in all experiments, we compute thresholds and tune hyperparameters using only ID data.

\subsection{Novelty detection}
\label{sec:detect-exp}
We consider two experimental setups of varying difficulties to evaluate our method.  The standard evaluation for OOD detection in prior work considers novelty detection \textit{across datasets}, where ID data differs significantly from OOD data. In this case, one entire dataset is considered ID, and an entirely different dataset is selected as OOD. Additionally, we propose evaluating novelty detection \textit{within datasets}, where ID and OOD data share similar characteristics, requiring novelty detection approaches to learn more fine-grained differences than in the across-datasets experiments. For this case, we randomly sample $C$ classes from a single dataset as ID, and use the remaining classes from the dataset as OOD.  For MNIST, FMNIST, CIFAR10, and SVHN, we sample $C = 5$ ID classes and use the remaining $5$ as novel. For GTSRB, we sample $C = 23$ and use the remaining $20$ classes as OOD data.

In all experiments, we train a classification model over an ID train and validation set. We then evaluate the model on both ID test data and OOD test data. Using this procedure, we compare our OOD detection approach (Section~\ref{sec:novelty-detection}) to a baseline of the best performing state-of-the-art OOD detector~\citep{techapanurak2020hyperparameter}, which outperforms alternative methods including ODIN and Mahalanobis. We evaluate performance using the following measures:
\vspace{-0.6em}
\begin{itemize}[noitemsep,nolistsep,topsep=0pt,parsep=0pt,partopsep=0pt]
    \item OOD detection accuracy---the ability to correctly classify an OOD data point as novel
    \item ID classification accuracy---the ability to correctly determine the class of an ID data point
    \item Combined ID and OOD  accuracy---the ability to classify a data point as the correct ID class or as OOD
    \item AUROC---the ability to discriminate between ID and OOD data over the combined ID and OOD datasets.
\end{itemize}
\vspace{-0.6em}
To aid generalization of results, all experiments are repeated over $10$ trials with random seeds, within-dataset experiments use different sets of ID classes for each seed, and we perform statistical significance analyses over the trials.

\newcommand{\barplotwidth}{0.30\columnwidth}
\newcommand{\barplotheight}{0.25\columnwidth}

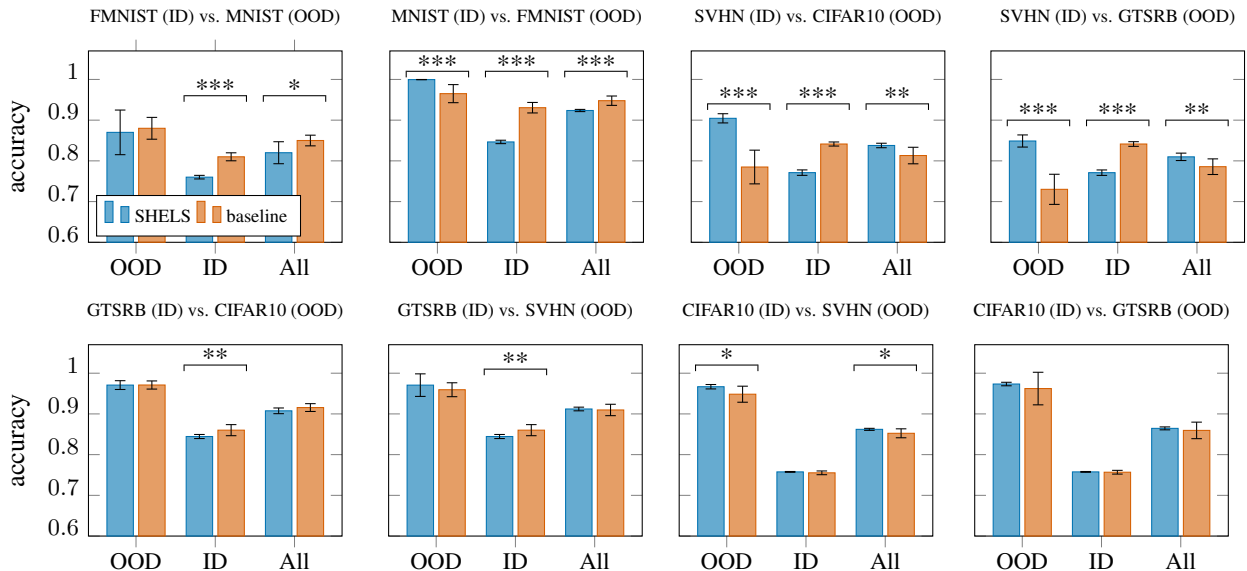
\begin{figure}[t]
\centering
    \begin{tikzpicture}
    \begin{axis} [
        ybar,
        width=\barplotwidth,
        height=\barplotheight,
        bar width=10pt,
        enlarge x limits=0.3,
        ymin=0.6,
        ymax=1.07,
        ytick={.6,.7,.8,.9,1.0},
        ylabel={accuracy},
        symbolic x coords={OOD, ID, All},
        ticklabel style = {font=\footnotesize, yshift=0.5ex},
        xtick=data,
        typeset ticklabels with strut,
        legend style={cells={anchor=west},
        font=\scriptsize},
        legend columns=4,
        legend pos=south west,
        title={FMNIST (ID) vs. MNIST (OOD)}, title style={font=\scriptsize},
    ]
    \addplot+ [
        draw = color1,
        fill = color1,
        fill opacity = 0.6,
        error bars/.cd,
            y dir=both,
            y explicit,
            error bar style={color=black},
    ] coordinates {
        (OOD,.87) +- (0,0.0548)
        (ID,.76) +- (0,0.0044)
        (All,.82) +- (0,0.027)
    };
    \addplot+ [
        draw = color3,
        fill = color3,
        fill opacity = 0.6,
        error bars/.cd,
            y dir=both,
            y explicit,
            error bar style={color=black},
    ]coordinates {
        (OOD,.88) +- (0,.0268)
        (ID,.81) +- (0,0.0098)
        (All,.85) +- (0,0.0131)
    };
    \draw [arrows={Bar[left]-Bar[right]}]
       ($(axis cs:ID,0.96)+(-12pt,0pt)$) -- node[midway, above=-4pt]{***} ($(axis cs:ID,0.96)+(12pt,0pt)$);
    \draw [arrows={Bar[left]-Bar[right]}]
       ($(axis cs:All,0.96)+(-12pt,0pt)$) -- node[midway, above=-4pt]{*} ($(axis cs:All,0.96)+(12pt,0pt)$);
    \legend{SHELS, baseline}
    \end{axis}
    \end{tikzpicture}
    \hfill
    \begin{tikzpicture}
    \begin{axis} [
        ybar,
        width=\barplotwidth,
        height=\barplotheight,
        bar width=10pt,
        enlarge x limits=0.3,
        ymin=0.6,
        ymax=1.07,
        ytick={.6,.7,.8,.9,1.0},
        yticklabels=\empty,
        symbolic x coords={ OOD,ID, All},
        ticklabel style = {font=\footnotesize, yshift=0.5ex},
        xtick=data,
        xtick pos=left,
        typeset ticklabels with strut,
        legend style={cells={anchor=west},
        font=\scriptsize},
        legend columns=4,
        legend pos=south west,
        title={MNIST (ID) vs. FMNIST (OOD)}, title style={font=\scriptsize},
    ]
    \addplot+ [
        draw = color1,
        fill = color1,
        fill opacity = 0.6,
        error bars/.cd,
            y dir=both,
            y explicit,
            error bar style={color=black},
    ] coordinates {
        (OOD,.9996) +- (0,.0003)
        (ID, .8465) +- (0,.0042)
        (All,.9237) +- (0,.0027)
    };
    \addplot+ [
        draw = color3,
        fill = color3,
        fill opacity = 0.6,
        error bars/.cd,
            y dir=both,
            y explicit,
            error bar style={color=black},
    ]coordinates {
        (OOD,.9651) +-  (0,.0222)
        (ID,.9306) +- (0,.0129)
        (All,.9478) +- (0,.0115)
    };
    \draw [arrows={Bar[left]-Bar[right]}]
       ($(axis cs:OOD,1.02)+(-12pt,0pt)$) -- node[midway, above=-4pt]{***} ($(axis cs:OOD,1.02)+(12pt,0pt)$);
    \draw [arrows={Bar[left]-Bar[right]}]
       ($(axis cs:ID,1.02)+(-12pt,0pt)$) -- node[midway, above=-4pt]{***} ($(axis cs:ID,1.02)+(12pt,0pt)$);
    \draw [arrows={Bar[left]-Bar[right]}]
       ($(axis cs:All,1.02)+(-12pt,0pt)$) -- node[midway, above=-4pt]{***} ($(axis cs:All,1.02)+(12pt,0pt)$);
    \end{axis}
    \end{tikzpicture}
    \hfill
    \begin{tikzpicture}
    \begin{axis} [
        ybar,
        width=\barplotwidth,
        height=\barplotheight,
        bar width=10pt,
        enlarge x limits=0.3,
        ymin=0.6,
        ymax=1.07,
        ytick={.6,.7,.8,.9,1.0},
        yticklabels=\empty,
        symbolic x coords={ OOD, ID, All},
        ticklabel style = {font=\footnotesize, yshift=0.5ex},
        xtick=data,
        xtick pos=left,
        typeset ticklabels with strut,
        legend style={cells={anchor=west},
        font=\scriptsize},
        legend columns=4,
        legend pos=south west,
        title={SVHN (ID) vs. CIFAR10 (OOD)}, title style={font=\scriptsize},
    ]
    \addplot+ [
        draw = color1,
        fill = color1,
        fill opacity = 0.6,
        error bars/.cd,
            y dir=both,
            y explicit,
            error bar style={color=black},
    ] coordinates {
        (OOD, .9045) +- (0, .0113)
        (ID, .7711) +- (0, .0067)
        (All, .8378) +- (0, .0055)
    };
    \addplot+ [
        draw = color3,
        fill = color3,
        fill opacity = 0.6,
        error bars/.cd,
            y dir=both,
            y explicit,
            error bar style={color=black},
    ]coordinates {
        (OOD,.7849) +- (0, .0414)
        (ID,.8414) +- (0, .005)
        (All,.8131) +- (0, .0203)
    };
    \draw [arrows={Bar[left]-Bar[right]}]
       ($(axis cs:OOD,0.94)+(-12pt,0pt)$) -- node[midway, above=-4pt]{***} ($(axis cs:OOD,0.94)+(12pt,0pt)$);
    \draw [arrows={Bar[left]-Bar[right]}]
       ($(axis cs:ID,0.94)+(-12pt,0pt)$) -- node[midway, above=-4pt]{***} ($(axis cs:ID,0.94)+(12pt,0pt)$);
    \draw [arrows={Bar[left]-Bar[right]}]
       ($(axis cs:All,0.94)+(-12pt,0pt)$) -- node[midway, above=-4pt]{**} ($(axis cs:All,0.94)+(12pt,0pt)$);
    \end{axis}
    \end{tikzpicture}
    \hfill
    \begin{tikzpicture}
    \begin{axis} [
        ybar,
        width=\barplotwidth,
        height=\barplotheight,
        bar width=10pt,
        enlarge x limits=0.3,
        ymin=0.6,
        ymax=1.07,
        ytick={.6,.7,.8,.9,1.0},
        yticklabels=\empty,
        symbolic x coords={OOD, ID, All},
        ticklabel style = {font=\footnotesize, yshift=0.5ex},
        xtick=data,
        xtick pos=left,
        typeset ticklabels with strut,
        legend style={cells={anchor=west},
        font=\scriptsize},
        legend columns=4,
        legend pos=south west,
        title={SVHN (ID) vs. GTSRB (OOD)}, title style={font=\scriptsize},
    ]
    \addplot+ [
        draw = color1,
        fill = color1,
        fill opacity = 0.6,
        error bars/.cd,
            y dir=both,
            y explicit,
            error bar style={color=black},
    ] coordinates {
        (OOD,.8488) +- (0, .0149)
        (ID,.7710) +- (0, .0067)
        (All,.8099) +- (0, .0091)
    };
    \addplot+ [
        draw = color3,
        fill = color3,
        fill opacity = 0.6,
        error bars/.cd,
            y dir=both,
            y explicit,
            error bar style={color=black},
    ]coordinates {
        (OOD,.7301) +- (0, .0370)
        (ID,.8414) +- (0, .0057)
        (All,.7857) +- (0, .0192)
    };
    \draw [arrows={Bar[left]-Bar[right]}]
       ($(axis cs:OOD,0.9)+(-12pt,0pt)$) -- node[midway, above=-4pt]{***} ($(axis cs:OOD,0.9)+(12pt,0pt)$);
    \draw [arrows={Bar[left]-Bar[right]}]
       ($(axis cs:ID,0.9)+(-12pt,0pt)$) -- node[midway, above=-4pt]{***} ($(axis cs:ID,0.9)+(12pt,0pt)$);
    \draw [arrows={Bar[left]-Bar[right]}]
       ($(axis cs:All,0.9)+(-12pt,0pt)$) -- node[midway, above=-4pt]{**} ($(axis cs:All,0.9)+(12pt,0pt)$);
    \end{axis}
    \end{tikzpicture}
    \\
    \begin{tikzpicture}
    \begin{axis} [
        ybar,
        width=\barplotwidth,
        height=\barplotheight,
        bar width=10pt,
        enlarge x limits=0.3,
        ymin=0.6,
        ymax=1.07,
        ytick={.6,.7,.8,.9,1.0},
        ylabel={accuracy},
        symbolic x coords={ OOD, ID, All},
        ticklabel style = {font=\footnotesize, yshift=0.5ex},
        xtick=data,
        xtick pos=left,
        typeset ticklabels with strut,
        legend style={cells={anchor=west},
        font=\scriptsize},
        legend columns=4,
        legend pos=south west,
        title={GTSRB (ID) vs. CIFAR10 (OOD)}, title style={font=\scriptsize},
    ]
    \addplot+ [
        draw = color1,
        fill = color1,
        fill opacity = 0.6,
        error bars/.cd,
            y dir=both,
            y explicit,
            error bar style={color=black},
    ] coordinates {
        (OOD,.9708) +- (0, .0108)
        (ID,.8445) +- (0, .0050)
        (All,.9076) +- (0, .0070)
    };
    \addplot+ [
        draw = color3,
        fill = color3,
        fill opacity = 0.6,
        error bars/.cd,
            y dir=both,
            y explicit,
            error bar style={color=black},
    ]coordinates {
        (OOD,.9710) +- (0, .0100)
        (ID,.8601) +- (0, .0136)
        (All,.9156) +- (0, .0096)
    };
    \draw [arrows={Bar[left]-Bar[right]}]
       ($(axis cs:ID,1.02)+(-12pt,0pt)$) -- node[midway, above=-4pt]{**} ($(axis cs:ID,1.02)+(12pt,0pt)$);
    \end{axis}
    \end{tikzpicture}
    \hfill
    \begin{tikzpicture}
    \begin{axis} [
        ybar,
        width=\barplotwidth,
        height=\barplotheight,
        bar width=10pt,
        enlarge x limits=0.3,
        ymin=0.6,
        ymax=1.07,
        ytick={.6,.7,.8,.9,1.0},
        yticklabels=\empty,
        symbolic x coords={OOD, ID, All},
        ticklabel style = {font=\footnotesize, yshift=0.5ex},
        xtick=data,
        xtick pos=left,
        typeset ticklabels with strut,
        legend style={cells={anchor=west},
        font=\scriptsize},
        legend columns=4,
        legend pos=south west,
        title={GTSRB (ID) vs. SVHN (OOD)}, title style={font=\scriptsize},
    ]
    \addplot+ [
        draw = color1,
        fill = color1,
        fill opacity = 0.6,
        error bars/.cd,
            y dir=both,
            y explicit,
            error bar style={color=black},
    ] coordinates {
        (OOD, .9707) +- (0, .0277)
        (ID, .8445) +- (0, .0050)
        (All, .9121) +- (0, .0045)
    };
    \addplot+ [
        draw = color3,
        fill = color3,
        fill opacity = 0.6,
        error bars/.cd,
            y dir=both,
            y explicit,
            error bar style={color=black},
    ]coordinates {
        (OOD, .9594) +- (0, .0171)
        (ID, .8601) +- (0, .0136)
        (All, .9097) +- (0, .0140)
    };
    \draw [arrows={Bar[left]-Bar[right]}]
       ($(axis cs:ID,1.01)+(-12pt,0pt)$) -- node[midway, above=-4pt]{**} ($(axis cs:ID,1.01)+(12pt,0pt)$);
    \end{axis}
    \end{tikzpicture}
    \hfill
    \begin{tikzpicture}
    \begin{axis} [
        ybar,
        width=\barplotwidth,
        height=\barplotheight,
        bar width=10pt,
        enlarge x limits=0.3,
        ymin=0.6,
        ymax=1.07,
        ytick={.6,.7,.8,.9,1.0},
        yticklabels=\empty,
        symbolic x coords={OOD, ID, All},
        ticklabel style = {font=\footnotesize, yshift=0.5ex},
        xtick=data,
        xtick pos=left,
        typeset ticklabels with strut,
        legend style={cells={anchor=west},
        font=\scriptsize},
        legend columns=4,
        legend pos=south west,
        title={CIFAR10 (ID) vs. SVHN (OOD)}, title style={font=\scriptsize},
    ]
    \addplot+ [
        draw = color1,
        fill = color1,
        fill opacity = 0.6,
        error bars/.cd,
            y dir=both,
            y explicit,
            error bar style={color=black},
    ] coordinates {
        (OOD, .9668) +- (0, .0054)
        (ID, .7576) +- (0, .0010)
        (All, .8621) +- (0, .0025)
    };
    \addplot+ [
        draw = color3,
        fill = color3,
        fill opacity = 0.6,
        error bars/.cd,
            y dir=both,
            y explicit,
            error bar style={color=black},
    ]coordinates {
        (OOD, .9484) +- (0, .0197)
        (ID, .7553) +- (0, .0046)
        (All, .8524) +- (0, .0110)
    };
    \draw [arrows={Bar[left]-Bar[right]}]
       ($(axis cs:OOD,1.02)+(-12pt,0pt)$) -- node[midway, above=-4pt]{*} ($(axis cs:OOD,1.02)+(12pt,0pt)$);
    \draw [arrows={Bar[left]-Bar[right]}]
       ($(axis cs:All,1.02)+(-12pt,0pt)$) -- node[midway, above=-4pt]{*} ($(axis cs:All,1.02)+(12pt,0pt)$);
    \end{axis}
    \end{tikzpicture}
    \hfill
    \begin{tikzpicture}
    \begin{axis} [
        ybar,
        width=\barplotwidth,
        height=\barplotheight,
        bar width=10pt,
        enlarge x limits=0.3,
        ymin=0.6,
        ymax=1.07,
        ytick={.6,.7,.8,.9,1.0},
        yticklabels=\empty,
        symbolic x coords={OOD, ID, All},
        ticklabel style = {font=\footnotesize, yshift=0.5ex},
        xtick=data,
        xtick pos=left,
        typeset ticklabels with strut,
        legend style={cells={anchor=west},
        font=\scriptsize},
        legend columns=4,
        legend pos=south west,
        title={CIFAR10 (ID) vs. GTSRB (OOD)}, title style={font=\scriptsize},
    ]
    \addplot+ [
        draw = color1,
        fill = color1,
        fill opacity = 0.6,
        error bars/.cd,
            y dir=both,
            y explicit,
            error bar style={color=black},
    ] coordinates {
        (OOD, .9734) +- (0, .0043)
        (ID, .7576) +- (0, .0010)
        (All, .8645) +- (0, .0036)
    };
    \addplot+ [
        draw = color3,
        fill = color3,
        fill opacity = 0.6,
        error bars/.cd,
            y dir=both,
            y explicit,
            error bar style={color=black},
    ]coordinates {
        (OOD, .9623) +- (0, .0400)
        (ID, .7568) +- (0, .0046)
        (All, .8595) +- (0, .0203)
    };
    \end{axis}
    \end{tikzpicture}
    \vspace{-0.35cm}
    \caption{\label{fig:OOD-detec-acc-across}Novelty detection across datasets, showing OOD detection accuracy (OOD), ID classification accuracy (ID), and combined ID and OOD classification accuracy (ALL) with mean $\pm$ standard deviation. Statistically significant differences (after Holm-Bonferroni adjustment) are denoted for $p<.05$ (*), $p<.01$ (**), and $p<.001$ (***). }
\end{figure}

Detection and classification accuracies for across-datasets experiments are shown in Figure~\ref{fig:OOD-detec-acc-across}, with corresponding AUROC measures given in Table~\ref{tab:auroc-across}.  The results show a trade-off between the learnt models' abilities to perform OOD detection and maintain  ID classification accuracy.  To analyze this trade-off, we performed two-tailed unpaired Student's t-tests between the baseline and our SHELS approach, using a Holm-Bonferroni adjustment to account for testing multiple measures. All test statistics, p-values, and corrections are reported in Appendix~\ref{sec:stat-test-details}.  The t-tests show that SHELS significantly improves OOD performance with respect to the baseline in half of the cases---MNIST-FMNIST, SVHN-CIFAR10, SVHN-GTSRB, CIFAR10-SVHN---with comparable performance otherwise.  This improvement in OOD detection is further supported by the AUROCs (Table~\ref{tab:auroc-across}), which shows that SHELS outperforms the baseline over a majority of experiments. In contrast, t-tests show the baseline outperforms our method in ID accuracy in all cases except when CIFAR10 is used as the ID data, where performance is comparable.  Note that the combined metric generally shows comparable performance between the methods, exemplifying the nature of the OOD vs.~ID tradeoff.  As such, while SHELS significantly improves OOD detection at the cost of some ID accuracy, it has comparable overall performance to the state-of-the-art OOD detection baseline on the relatively easier across-datasets experiments.

Detection and classification accuracies for the more challenging within-dataset experiments are shown in Figure~\ref{fig:OOD_detec_within}, with corresponding AUROC measures given in Table~\ref{tab:auroc-within}.  The results show a similar OOD vs.~ID accuracy trade-off as in the across-datasets experiments, although the differences between SHELS and the baseline become more apparent due to the more difficult within-dataset case. \Revision{Paired t-tests showed significant improvements in OOD detection using SHELS for four of the five datasets, with no significant difference found for GTSRB}.  The AUROC measure further supports SHELS's improved OOD detection performance, outperforming the baseline by large margins for many of the datasets.  In contrast, paired t-tests showed significant improvements in ID classification accuracy using the baseline approach for all datasets, \Revision{although SHELS exhibits significant improvements in combined accuracy for four of the five datasets, with no significant difference found for GTSRB}. We conclude that our approach's improvements in OOD detection accuracy result in comparable or significant improvements in overall performance compared to the state of the art.

\subsection{Novelty accommodation}
\label{sec:accom-experiment}

\begin{table}[t]
    \small
    \begin{minipage}[t]{0.55\linewidth}
    \centering
    \captionsetup{width=0.95\linewidth}
    \caption{AUROC of OOD detection across datasets (mean $\pm$ standard deviation over 10 trials).}
    \vspace{-.25cm}
    \label{tab:auroc-across}
        \begin{tabular}{ll|llc}
         ID Data & OOD Data & SHELS &Baseline & Sig.\\
        \hline
        MNIST  & FMNIST          & \textbf{0.99} {\scriptsize $\pm$ 0.002 } & 0.94 {\scriptsize $\pm$ 0.03} & ***\\

        FMNIST  & MNIST          & \textbf{0.88} {\scriptsize $\pm$ 0.063 } & 0.77 {\scriptsize $\pm$ 0.064} & **\\

        GTSRB  & CIFAR10          &\textbf{0.96} {\scriptsize $\pm$ 0.01 } &0.90 {\scriptsize $\pm$ 0.032} & ***\\
        GTSRB  & SVHN          & \textbf{0.98 }{\scriptsize $\pm$ 0.005 } & 0.94 {\scriptsize $\pm$ 0.018} & ***\\

        SVHN  & CIFAR10          & 0.90 {\scriptsize $\pm$ 0.007 } & 0.90 {\scriptsize $\pm$ 0.01} & \\
        SVHN  & GTSRB          & 0.88 {\scriptsize $\pm$ 0.009 } & \textbf{0.89} {\scriptsize $\pm$ 0.008} & *\\

        CIFAR10  & GTSRB          & \textbf{0.93} {\scriptsize $\pm$ 0.002 } & 0.92 {\scriptsize $\pm$ 0.025} & \\
        CIFAR10  & SVHN          & \textbf{0.97} {\scriptsize $\pm$ 0.003 } & 0.95 {\scriptsize $\pm$ 0.05} & \\\hline
        \end{tabular}
    \end{minipage}%
    \begin{minipage}[t]{0.45\linewidth}
    \centering
    \captionsetup{width=0.95\linewidth}
    \caption{AUROC of OOD detection within datasets (mean $\pm$ std.~dev.~over 10 trials).}
    \vspace{-.25cm}
    \label{tab:auroc-within}
        \begin{tabular}{l|llc}
        Dataset & SHELS & Baseline & Sig.
        \\ \hline
        MNIST            & \textbf{0.96} {\scriptsize $\pm$ 0.017 } & 0.91 {\scriptsize $\pm$ 0.062} & *\\

        FMNIST     & \textbf{0.79} {\scriptsize $\pm$ 0.063} & 0.74 {\scriptsize $\pm$ 0.084} & \\

        GTSRB            & \textbf{0.88 }{\scriptsize $\pm$ 0.055 } & 0.81 {\scriptsize $\pm$ 0.061} & * \\

        \Revision{CIFAR10}          & \Revision{\textbf{0.70} {\scriptsize $\pm$ 0.059 } }
        & \Revision{0.69 {\scriptsize $\pm$ 0.067 }} &  \\

        SVHN      & \textbf{0.82 }{\scriptsize $\pm$ 0.023} & 0.80 {\scriptsize $\pm$ 0.018} & \\\hline
        \end{tabular}
    \end{minipage}
    \vspace{-0.2cm}
\end{table}

We evaluate SHELS's ability to accommodate novel classes through class-incremental continual learning experiments on the MNIST, \Revision{FMNIST (see Appendix~\ref{sec:additionaleClExp}), and GTSRB (see Appendix~\ref{sec:additionaleClExp}) datasets and compare it with AGS-CL~\citep{jung2020continual}, a similar sparsity and weight freezing approach that has shown robust performance, and GDUMB~\citep{prabhu2020gdumb}, a replay-based method.} We also demonstrate the importance of key components of SHELS that enable incremental novelty accommodation, including encouraging group sparsity and penalizing weight updates for novelty accommodation, via ablation studies.
We begin by training a classification model to learn SHELS features using the ID train set. For MNIST, the ID data consists of 7 randomly sampled classes and the remaining 3 classes are OOD. The OOD classes are introduced incrementally in the typical class-incremental continual learning fashion, and our algorithm accommodates them in the SHELS representation as described in Section~\ref{sec:novelty-accomm}.

\begin{figure}[t]
    \centering
    \begin{tikzpicture}
    \begin{axis} [
        ybar,
        height=0.3\columnwidth,
        width=0.85\columnwidth,
        bar width=7pt,
        enlarge x limits=0.15,
        ymin=0.3,
        ymax=1.1,
        ytick={.4,.6,.8,1.0},
        ylabel={accuracy},
        symbolic x coords={MNIST, FMNIST, SVHN, CIFAR10, GTSRB},
        ticklabel style = {font=\footnotesize, yshift=0.5ex},
        xtick=data,
        xtick pos=left,
        typeset ticklabels with strut,
        legend style={cells={anchor=west}, font=\scriptsize},
        legend pos=outer north east,
    ]
    \addplot+ [
        draw = color1,
        fill = color1,
        fill opacity = 0.6,
        error bars/.cd,
            y dir=both,
            y explicit,
            error bar style={color=black},
    ] coordinates {
        (MNIST, .9672) +- (0, .0185)
        (FMNIST, .6492) +- (0, .1398)
        (SVHN, .6925) +- (0, .0450)
        (CIFAR10, .5978) +- (0, .0658)
        (GTSRB, .8143) +- (0, .0433)
    };
    \addplot+ [
        draw = color3,
        fill = color3,
        fill opacity = 0.6,
        error bars/.cd,
            y dir=both,
            y explicit,
            error bar style={color=black},
    ] coordinates {
        (MNIST, .7765) +- (0, .0788)
        (FMNIST, .4364) +- (0, .1211)
        (SVHN, .5192) +- (0, .0673)
        (CIFAR10, .5261) +- (0, .0513)
        (GTSRB, .7655) +- (0, .1238)
    };
    \addplot+ [
        draw = color1,
        fill opacity = 1,
        pattern = crosshatch,
        pattern color = color1!30!color1,
        error bars/.cd,
            y dir=both,
            y explicit,
            error bar style={color=black},
    ] coordinates {
        (MNIST, .8513) +- (0, .0041)
        (FMNIST, .7767) +- (0, .0256)
        (SVHN, .7922) +- (0, .0116)
        (CIFAR10, .6713) +- (0, .027)
        (GTSRB, .8403) +- (0, .0133)
    };
    \addplot+ [
        draw=color3,
        fill opacity = 1,
        pattern = crosshatch,
        pattern color = color3!30!color3,
        error bars/.cd,
            y dir=both,
            y explicit,
            error bar style={color=black},
    ] coordinates {
        (MNIST, .9258) +- (0, .0008)
        (FMNIST, .8434) +- (0, .0419)
        (SVHN, .8749) +- (0, .0195)
        (CIFAR10, .6782) +- (0, .0303)
        (GTSRB, .8871) +- (0, .0190)
    };
    \addplot+ [
        draw=color1,
        fill opacity = 1,
        pattern = horizontal lines,
        pattern color = color1!30!color1,
        error bars/.cd,
            y dir=both,
            y explicit,
            error bar style={color=black},
    ] coordinates {
        (MNIST, .9092) +- (0, .0084)
        (FMNIST, .7129) +- (0, .0645)
        (SVHN, .7423) +- (0, .0215)
        (CIFAR10, .6345) +- (0, .0418)
        (GTSRB, .8273) +- (0, .0214)
    };
    \addplot+ [
        draw=color3,
        fill opacity = 1,
        pattern=horizontal lines,
        pattern color = color3!30!color3,
        error bars/.cd,
            y dir=both,
            y explicit,
            error bar style={color=black},
    ] coordinates {
        (MNIST, .8512) +- (0, .0383)
        (FMNIST, .6399) +- (0, .0545)
        (SVHN, .6971) +- (0, .0263)
        (CIFAR10, .6022) +- (0, .0364)
        (GTSRB, .8253) +- (0, .0691)
    };
    \draw [arrows={Bar[left]-Bar[right]}]
       ($(axis cs:MNIST,1.03)+(-26pt,0pt)$) -- node[midway, above=-4pt]{***} ($(axis cs:MNIST,1.03)+(-10pt,0pt)$);
    \draw [arrows={Bar[left]-Bar[right]}]
       ($(axis cs:MNIST,1.03)+(-8pt,0pt)$) -- node[midway, above=-4pt]{***} ($(axis cs:MNIST,1.03)+(8pt,0pt)$);
    \draw [arrows={Bar[left]-Bar[right]}]
       ($(axis cs:MNIST,1.03)+(10pt,0pt)$) -- node[midway, above=-4pt]{**} ($(axis cs:MNIST,1.03)+(26pt,0pt)$);
    \draw [arrows={Bar[left]-Bar[right]}]
       ($(axis cs:FMNIST,0.94)+(-26pt,0pt)$) -- node[midway, above=-4pt]{**} ($(axis cs:FMNIST,0.94)+(-10pt,0pt)$);
    \draw [arrows={Bar[left]-Bar[right]}]
       ($(axis cs:FMNIST,0.94)+(-8pt,0pt)$) -- node[midway, above=-4pt]{***} ($(axis cs:FMNIST,0.94)+(8pt,0pt)$);
    \draw [arrows={Bar[left]-Bar[right]}]
       ($(axis cs:FMNIST,0.94)+(10pt,0pt)$) -- node[midway, above=-4pt]{*} ($(axis cs:FMNIST,0.94)+(26pt,0pt)$);
    \draw [arrows={Bar[left]-Bar[right]}]
       ($(axis cs:SVHN,0.95)+(-26pt,0pt)$) -- node[midway, above=-4pt]{***} ($(axis cs:SVHN,0.95)+(-10pt,0pt)$);
    \draw [arrows={Bar[left]-Bar[right]}]
       ($(axis cs:SVHN,0.95)+(-8pt,0pt)$) -- node[midway, above=-4pt]{***} ($(axis cs:SVHN,0.95)+(8pt,0pt)$);
    \draw [arrows={Bar[left]-Bar[right]}]
       ($(axis cs:SVHN,0.95)+(10pt,0pt)$) -- node[midway, above=-4pt]{***} ($(axis cs:SVHN,0.95)+(26pt,0pt)$);
    \draw [arrows={Bar[left]-Bar[right]}]
       ($(axis cs:CIFAR10,0.9)+(-26pt,0pt)$) -- node[midway, above=-4pt]{***} ($(axis cs:CIFAR10,0.9)+(-10pt,0pt)$);
    \draw [arrows={Bar[left]-Bar[right]}]
       ($(axis cs:CIFAR10,0.9)+(-8pt,0pt)$) -- node[midway, above=-4pt]{**} ($(axis cs:CIFAR10,0.9)+(8pt,0pt)$);
    \draw [arrows={Bar[left]-Bar[right]}]
       ($(axis cs:CIFAR10,0.9)+(10pt,0pt)$) -- node[midway, above=-4pt]{***} ($(axis cs:CIFAR10,0.9)+(26pt,0pt)$);
    \draw [arrows={Bar[left]-Bar[right]}]
       ($(axis cs:GTSRB,0.96)+(-8pt,0pt)$) -- node[midway, above=-4pt]{***} ($(axis cs:GTSRB,0.96)+(8pt,0pt)$);
    \legend{OOD (SHELS), OOD (baseline), ID (SHELS), ID (baseline), All (SHELS), All (baseline)}
    \end{axis}
    \end{tikzpicture}
    \vspace{-.3cm}
    \caption{\label{fig:OOD_detec_within}Novelty detection within datasets, showing OOD detection accuracy (OOD), ID classification accuracy (ID), and combined ID and OOD classification accuracy (ALL) with mean $\pm$ standard deviation. Statistically significant differences (after Holm-Bonferroni adjustment) are denoted for $p<.05$ (*), $p<.01$ (**), and $p<.001$ (***).}
    \vspace{-0.1cm}
\end{figure}
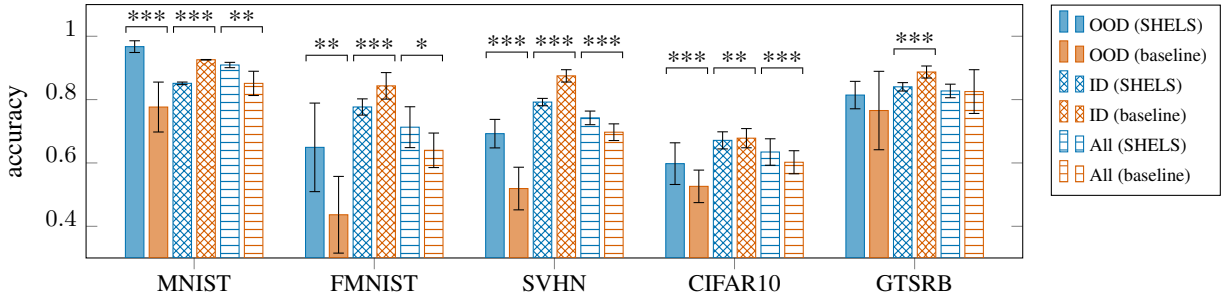

The class-incremental learning performance of SHELS is shown in the orange performance curves of Figure~\ref{cl_results}. When a new class is presented to the model, for instance class 8, it experiences a drop in test accuracy depicted by the solid orange curve at epoch 10, as the test set now includes data from the previously unseen class 8. The model then learns class 8, as shown by the dashed novel class test accuracy curve, while preserving performance on the previous seven classes, as evidenced by the upward trend of the overall test accuracy curve.  This trend repeats for subsequently introduced novel classes, showing that SHELS learns and accommodates novel classes while mitigating catastrophic forgetting. \Revision{SHELS}
\begin{wrapfigure}{r}{0.42\textwidth}
\vspace{-0.em} 
    \begin{minipage}{0.42\textwidth}
        \centering
        \includegraphics[width=1.0\columnwidth,clip, trim=.1in .2in 0in .2in ]{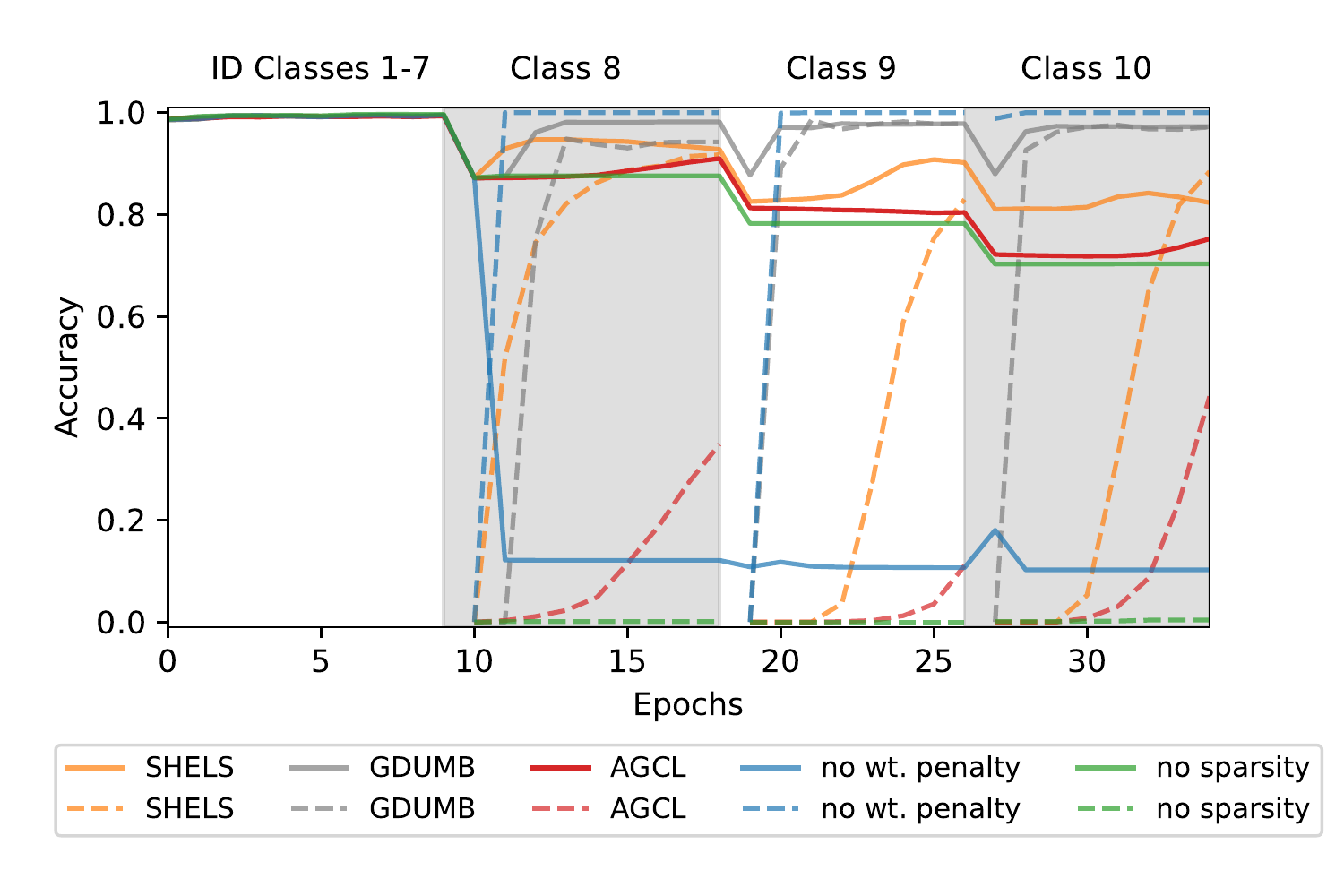}
        \caption{\label{cl_results}\Revision{Class incremental learning on MNIST.  Solid lines show overall test accuracy, dashes show test accuracy of the novel class only.}}
    \end{minipage}
    \vspace{-1em}
    \end{wrapfigure}
\Revision{significantly outperforms the state-of-the-art AGS-CL method. AGS-CL's performance is closer to SHELS without group sparsity (Ablation 1 below) than to the full SHELS implementation, indicating that SHELS induces sparsity for incorporating new knowledge better than AGS-CL. We also compare our approach to GDUMB, which uses $K$ replay samples balanced across the ID classes. For our experiments, we set $K$ to 1\% of the total training data. While GDUMB outperforms SHELS, it is important to note that SHELS does not need to store replay data, which may not be possible in memory-constrained or privacy-sensitive applications.  As $K$ increases, GDUMB serves as an upper bound on performance, representing the non-continual learning setting where all data is available to the training algorithm at all times. As the fundamental mechanism of SHELS is complementary to replay, we show that the ideas of SHELS and GDUMB are compatible in Appendix~\ref{sec:additionaleClExp}, combining them to yield a new SHELS-GDUMB algorithm}.

\textbf{Ablation 1: Group Sparsity~~} To show the importance of encouraging group sparsity for novelty accommodation, we evaluate our method by repeating the same experiment without group sparsity regularization, (\textit{no sparsity} curves in Figure~\ref{cl_results}). The regular drop in test accuracy when new classes are introduced and the new class performance staying at zero indicates that the network has no capacity available to accommodate new class information.

\textbf{Ablation 2: Penalizing Weight Updates~~} To show the effectiveness of penalizing weight updates for novelty accommodation, we evaluate our method without any penalties on weight updates (\textit{no wt.~penalty} curves in Figure~\ref{cl_results}).  While new classes can be learnt successfully, as shown by the new class performance, there is no mechanism to maintain performance on previous tasks, catastrophic forgetting occurs, and test accuracy drops significantly.

The ablations demonstrate the roles of group sparsity and weight update penalization in our method, enabling SHELS to accommodate novel classes in a class-incremental continual learning setting. We evaluate our continual learning approach over \Revision{additional datasets and longer class sequences} in Appendix~\ref{sec:additionaleClExp}, showing qualitatively similar results.

\subsection{Novelty detection and accommodation}
\label{sec:detect-accomm-exp}



In this section, we demonstrate the performance of SHELS to both detect and accommodate OOD data using the pipeline presented in Section~\ref{sec:detect_accomm}. To this end, we conducted a class-incremental continual learning experiment where \textit{class boundaries are not given to the learning agent}. The experiment is designed such that the agent must alternate between novelty detection and accommodation phases, and the decision to switch from detection to accommodation is made by the agent. At each epoch in the detection phase, the agent encounters a random sample of either ID or OOD data.  In our experiments, OOD data is always provided at the fourth detection epoch for consistent visualization---note that this consistency  has no effect on the agent's ability to detect OOD data. If the agent detects a sample as OOD with a confidence score greater than a preset threshold, it triggers an accommodation phase where it is provided with more of the data from the last detection epoch. The model accommodates this data as a novel class, and once the new class is learnt, it switches back to detection, and the detection/accommodation cycle continues. To the best of our knowledge, this is a novel evaluation paradigm for continual learning without labeled class boundaries. With no methods for this setting in the literature, we provide results from our approach as a case study \Revision{and compare against a combination of the OOD detection and continual learning components of \citet{lee2018simple}, which we denote MAHA}.

\begin{figure}[t]
    \centering
  \begin{subfigure}{0.5\textwidth}
      \includegraphics[width=\textwidth,clip, trim=.15in .25in .15in 3.15in]{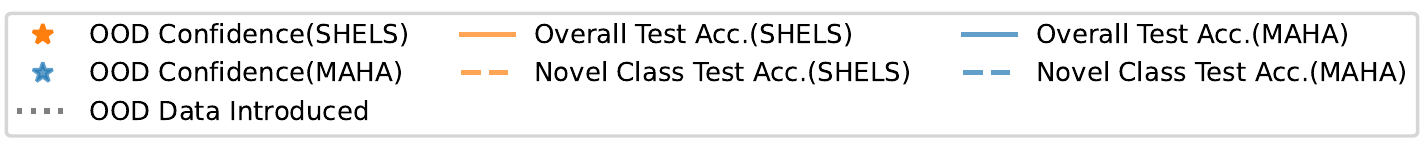}
  \end{subfigure}\\
  \begin{subfigure}{0.5\textwidth}
        \centering
        \includegraphics[height=1.5in,clip, trim=.15in 0.9in .15in .15in]{Image_pdfs/full_pipeline_mnist_with_baseline.pdf}
        \vspace{-0.5em}
        \caption{\Revision{MNIST}}
        \label{fig:full-pipeline-1}
    \end{subfigure}%
    \begin{subfigure}{0.5\textwidth}
        \centering
        \includegraphics[height=1.5in,clip, trim=.15in 0.9in .15in .15in]{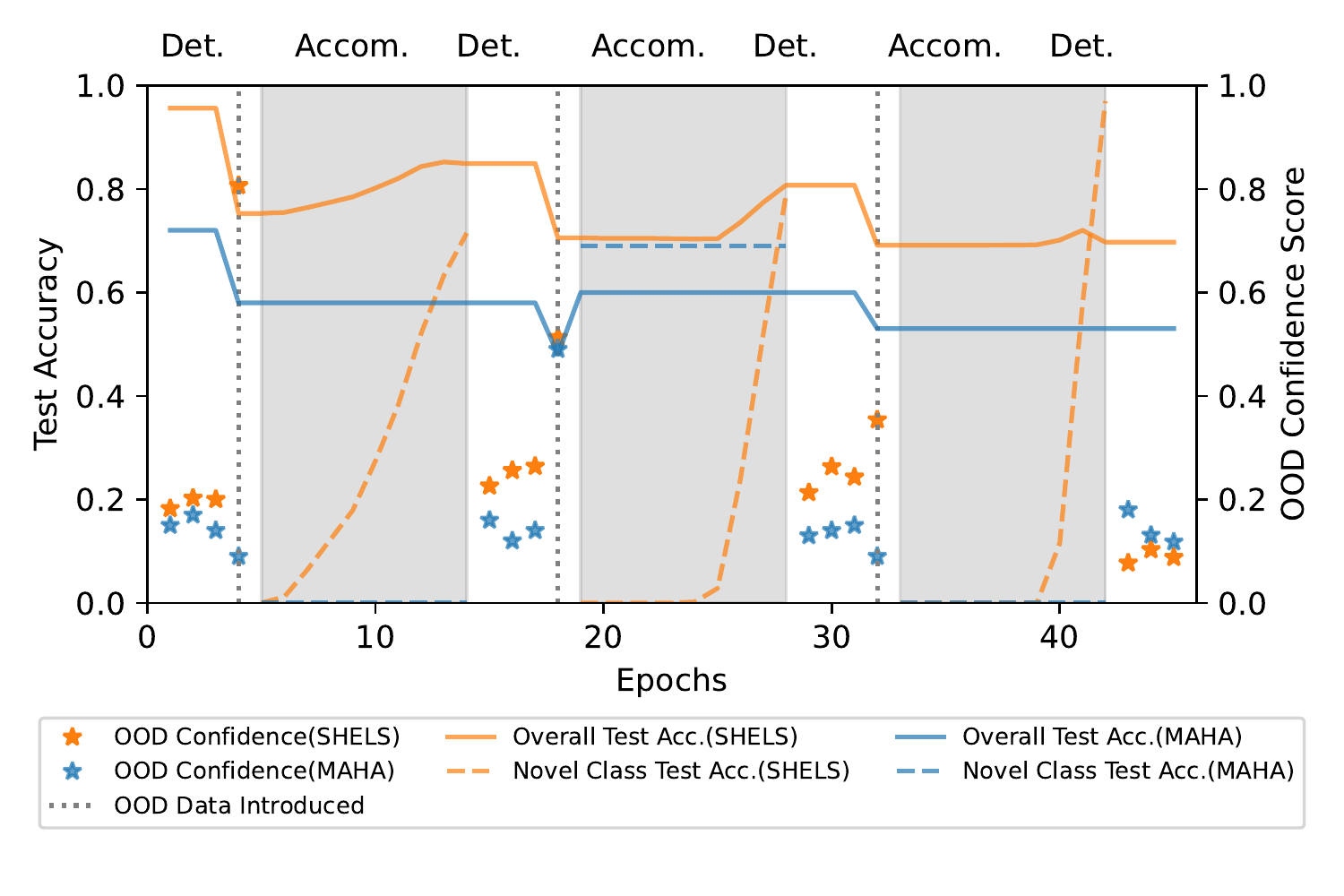}
        \vspace{-0.5em}
        \caption{\Revision{FMNIST}}
        \label{fig:full-pipeline_2}
    \end{subfigure}%
    \vspace{-0.5em}
    \caption{Class incremental learning without boundaries, alternating between detection and accommodation phases.}
    \label{fig:full-pipeline}
\end{figure}

We use seven randomly sampled classes from the MNIST dataset as ID and the remaining as OOD, and \Revision{four randomly sampled classes from FMNIST datatset as ID and three out of the remaining six classes as OOD. A performance comparison of our approach with MAHA is shown in Figure~\ref{fig:full-pipeline}}. For MNIST, we begin by learning a classification model consisting of SHELS features using the ID train set. The model is then deployed and it enters the detection phase. For the first three epochs, it encounters a random sample of test data from the previously seen seven classes, which it correctly identifies as ID as shown by the low OOD confidence scores. At the fourth epoch, the model encounters a sample of OOD class data, correctly identifies it, and switches to the accommodation phase. The newly detected class is accommodated (as shown by the novel class test accuracy curve) while maintaining performance on the previous classes (as shown by the overall test accuracy curve). The model switches back to the detection phase, and the process repeats until the final detection phase, where all ten classes have been accommodated and all data is correctly detected as ID.
\Revision{We repeat the same procedure for FMNIST. For both datasets, SHELS correctly identifies and accommodates all novel classes. In comparison, while MAHA is able to detect and accommodate all novel MNIST classes, it fails to detect two of the novel FMNIST classes, resulting in significantly lower performance when compared to SHELS.  

We offer the following explanation for why SHELS outperforms MAHA. For novelty detection, MAHA computes a confidence score using the Mahalanobis distance with respect to the nearest ID class-conditional
distribution. While this approach identifies novel classes with features that significantly differ from the ID classes, it fails to identify novel classes with some features similar to the ID classes, which is a frequent case within the FMNIST dataset. SHELS overcomes this drawback by computing the similarity between the high-level activation patterns of the novel class and the exclusive feature sets for each ID class. Since each ID class is represented by a unique set of high-level features, then even if a novel class shares some features with the ID classes, comparing the similarity between these high-level exclusive feature sets still results in effective novelty detection.
As for novelty accommodation, MAHA reuses the previously learnt representation, and thus lacks the ability to learn new knowledge. For both the MNIST and FMNIST datasets, MAHA performs poorly when compared to SHELS, since new knowledge needs to be learnt when accommodating new classes. In contrast, due to the sparsity, SHELS is able to learn new knowledge and accommodate new classes more effectively.}

The above experiments are analogous to an agent deployed in the open world, where it encounters objects from known classes in its environment. Eventually, the agent will encounter an object of an unknown class, detect that the object is novel, gather additional data to accommodate the new object in its model, and continue to operate in its environment with the updated set of ID classes. Our results validate that the SHELS representation enables integrated novelty detection and accommodation within a single model, and can operate effectively in such an open-world setting.

\vspace{-.1cm}
\section{Conclusion}
\label{sec:conclusion}
\vspace{-.1cm}

In this work, we show the potential of combining novelty detection with continual learning to develop open-world learners. We demonstrated a system capable of class-incremental continual learning without class boundaries, which automatically detects the presence of new classes via a novel OOD detection method that leverages our human-inspired SHELS representation and significantly outperforms the state of the art. 
In particular, SHELS leverages the \textit{exclusivity} of high-level feature sets to detect OOD data, and the \textit{sharing} of low-level features to free up nodes for accommodating future classes.
Our experimental results illustrate the trade-off between ID classification and OOD detection accuracies. To mitigate this trade-off, one promising avenue of future work is to develop alternative methods for encouraging exclusive feature sets. 
Our method can also be extended by integrating complementary continual learning approaches to increase learning robustness. In particular, dynamic network expansion would endow agents with the capability to automatically grow the capacity of the SHELS network if it runs out of unused nodes to accommodate large numbers of new classes, enabling potentially unbounded learning. With a suitably robust system, we plan to evaluate our approach in physical open-world environments through situated robot deployments as future work.

\subsubsection*{Acknowledgments}
We would like to thank the anonymous reviewers for their helpful feedback on the earlier draft. The research presented in this paper was partially supported by the DARPA Lifelong Learning Machines program under grant FA8750-18-2-0117, the DARPA SAIL-ON program under contract HR001120C0040, the DARPA ShELL program under agreement HR00112190133, and the Army Research Office under MURI grant W911NF20-1-0080.

\bibliography{collas2022_conference}
\bibliographystyle{collas2022_conference}

\newpage
\appendix

\section{Orthogonal features are exclusive sets}

\label{sec:ProofOfPropOrthogonalityCapturesExclusivity}

This section formalizes the notation that orthogonal features are exclusive sets. Let each data instance $\bm{x} \in \Reals^d$ be embedded as set of derived features $\bm{f} = g(\bm{x}) \in \Reals^k$. Recall that our goal is for each class to have an exclusive embedding of the non-zero entries of $\bm{f}$.

\begin{proposition}
\label{prop:OrthogonalityCapturesExclusivity}
Let $\bm{f}_1, \ldots, \bm{f}_C \in \Reals^k$ be the orthogonal embeddings for each of $C$ classes, such that $\forall i\neq j \ \bm{f}_i \perp \bm{f}_j$, with $k \geq C$. Then, there exists an orthogonal transformation $T$ such that the transformed embeddings $T(\bm{f}_1), \ldots, T(\bm{f}_C)$ form a collection of exclusive sets.
\end{proposition}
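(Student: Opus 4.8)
The plan is to read the ``set'' attached to a feature vector $\bm{f}_i$ as its \emph{support} $\mathrm{supp}(\bm{f}_i) = \{\, j : (\bm{f}_i)_j \neq 0 \,\}$, consistent with the discussion in Section~\ref{sec:ExclusiveFeatureSets}, where exclusive feature sets are identified with the non-zero entries of the embeddings. Under this reading the proposition reduces to: after a suitable rotation, the supports $\mathrm{supp}(T(\bm{f}_i))$ are pairwise exclusive in the sense of Definition~1. I would first remark why a genuine transformation is needed at all: orthogonal vectors can share a common support (e.g.\ $(1,1)$ and $(1,-1)$ in $\Reals^2$), and a common support is not exclusive, so $T=\mathrm{Id}$ does not work in general.

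The first step is to build $T$ from an orthonormal frame. Since the $\bm{f}_i$ are mutually orthogonal and, by Definition~1 (which forbids empty sets), non-zero, the normalized vectors $\hat{\bm{f}}_i = \bm{f}_i/\|\bm{f}_i\|$ form an orthonormal system in $\Reals^k$; in particular they are linearly independent. Because $k \geq C$, I would extend this system to a full orthonormal basis $\hat{\bm{f}}_1,\ldots,\hat{\bm{f}}_k$ of $\Reals^k$ (Gram--Schmidt applied to any completion). Let $Q \in \Reals^{k\times k}$ be the orthogonal matrix whose $i$-th column is $\hat{\bm{f}}_i$, and set $T = Q^{\top}$, which is an orthogonal transformation.

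The second step is the computation and the conclusion. One has $T(\bm{f}_i) = Q^{\top}\!\big(\|\bm{f}_i\|\,\hat{\bm{f}}_i\big) = \|\bm{f}_i\|\,\bm{e}_i$, so $\mathrm{supp}(T(\bm{f}_i)) = \{i\}$. For $i \neq j$ the singletons satisfy $\{i\}-\{j\} = \{i\} \neq \emptyset$ and $\{j\}-\{i\} = \{j\} \neq \emptyset$, hence every pair is exclusive (Definition~1) and therefore $\{T(\bm{f}_1),\ldots,T(\bm{f}_C)\}$ is a collection of exclusive sets (Definition~2), which is the claim. As a bonus one can note the rotated supports are singletons, which is strictly stronger than exclusivity.

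The only real subtlety---the ``main obstacle'', such as it is---is bookkeeping around conventions and degeneracies: pinning down that ``set'' means support, and using the non-emptiness built into Definition~1 to justify $\bm{f}_i \neq 0$ for all $i$, so that the $\hat{\bm{f}}_i$ are well defined, genuinely orthonormal, and extendable to a basis. Everything beyond that is the standard fact that an orthonormal frame can be rotated onto the standard basis, so I expect no technical difficulty once the setup is fixed.
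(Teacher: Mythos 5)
Your proof is correct and follows essentially the same route as the paper's: construct an orthogonal change of basis sending the orthogonal frame onto (multiples of) the standard basis vectors, then observe that the resulting singleton supports are pairwise exclusive. You are in fact slightly more careful than the paper---normalizing first so that $T$ is genuinely orthogonal even when the $\bm{f}_i$ have unequal norms, and extending to a full basis of $\Reals^k$---but the underlying idea is identical.
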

\begin{proof}
Since $\bm{f}_1, \ldots, \bm{f}_C \in \Reals^k$ are orthogonal, they are linearly independent and form a basis for a subspace $\Reals^C \subseteq \Reals^k$. Therefore, we can define a change-of-basis matrix $T$ to transform from the basis defined by $\bm{f}_1, \ldots, \bm{f}_C$ onto the standard basis $\mathcal{B}$ for $\Reals^C$. By definition, $T$ must be an orthogonal transformation. Each vector of the standard basis has a single index, unique with respect to all other vectors of the standard basis, that contains a non-zero element.  We can interpret each vector $v \in \mathcal{B} \in \{0,1\}^C$ as encoding an equivalent corresponding set $\hat{\mathcal{V}}$ formed by $\bigcup_{i=1}^{C} \{i \times v_i\} - \{0\}$ (the set of all non-zero indices of that vector). Under this construction, each pair of vectors $u \neq v \in \mathcal{B}$ has equivalent exclusive sets $\hat{\mathcal{U}}$ and $\hat{\mathcal{V}}$, since the indices of their non-zero elements must differ, and therefore the collection must also be exclusive.
\end{proof}

\section{Connections to other related areas}
\label{sec:additionalRelatedWork}
\Revision{Since our approach spans multiple subjects that are usually studied separately (novelty detection and continual learning), we draw connections to a wide variety of prior work.  Some of the connections worth considering, but outside the scope of our \nameref{sec:relatedWork} section, are discussed below.

\textbf{Few-shot Learning.} Similar ideas to the high-level exclusive and low-level shared feature representation of SHELS have been previously explored in the few-shot transfer learning literature.  Networks trained with few-shot learning approaches based on MAML~\citep{finn2017model, antoniou2019train} result mainly in changes at the later layers in a network (i.e., changes in higher level features), creating a mostly immutable shared set of low-level features.  Protonets use a learnt feature embedding over which to make classification decisions, which can be interpreted as a shared low-level feature representation~\citep{snell2017prototypical}.  Few-shot learning approaches have not, however, addressed the notion of high-level \textit{set-based exclusivity} discussed in our work.  Such a concept is less relevant to the few-shot learning paradigm, which is more concerned with learning models that can be quickly adjusted to new tasks when new data is presented.  In contrast, OOD detection approaches must differentiate novel OOD data \textit{before} any model updates, and so this notion of maintaining high-level exclusivity (among both previously seen and OOD classes) becomes very important in the OOD detection paradigm.  Few-shot learning approaches such as MAML address a transfer learning problem where they do not need to maintain performance on previous tasks, and are thus more suited to fast adaptation rather than novelty detection. Further, the prior work in few-shot learning doesn’t explicitly discuss or evaluate the low-level shared or high-level exclusive feature representations that our approach explicitly encourages in the loss function, or that we explicitly evaluate with an exclusivity metric.

\textbf{Modular Architectures.} One tangentially related line of work that has studied sparsity in continual learning is the discovery of modular and compositional architectures. In this case, sparsity is manually embedded into the design of the network via modularity, instead of discovered by the agent as in our approach. The majority of such methods freeze the weights of the modules immediately after training them on a task~\citep{reed2016neural,fernando2017pathnet,valkov2018houdini,li2019learn,veniat2021efficient}, unlike our approach that automatically detects which weights to freeze based on their importance to previous tasks. Other methods instead continually update the weights of the modules without freezing them~\citep{rajasegaran2019random,chen2020mitigating,mendez2021lifelong,qin2021bns,ostapenko2021continual}.}

\section{Cosine normalization induces exclusive feature sets}

\label{sec:CosineNormalizationInducesExclusiveSets}

To empirically verify the ability of cosine normalization to lead to exclusive feature sets, we trained a DNN on five randomly sampled classes from the FMNIST dataset and measured the exclusivity of the high-level feature activations of the classes. Given two sets of feature activations, we measured their exclusivity as the ratio of the total number of dissimilar activations to the total number of activation at the highest-level representation:
\begin{equation}
    \label{equ:activationSimilarity}
    \mathit{exclusivity}(\inputs_{1}, \inputs_{1}) = \frac{\hat{\features}^{L-1}(\inputs_{1}) \oplus \hat{\features}^{L-1}(\inputs_{2})} {\hat{\features}^{L-1}(\inputs_{1}) \lor \hat{\features}^{L-1}(\inputs_{2})}\enspace,
\end{equation}
where $\hat{\features}^{L-1}(\inputs)$ is the unit vector of the final high-level features.  A score of 1.0 indicates perfect exclusivity. We used three methods for training the DNN: a standard linear layer, the method of \citet{techapanurak2020hyperparameter}, and the cosine similarity formulation of Equation~\ref{equ:CosineSimilarity}. Table~\ref{exclusivity_metric} shows the resulting exclusivity for a sample of five classes, while Figure~\ref{fig:avg_exclusivity} shows aggregated results over ten trials, clearly demonstrating that our proposed method is the only one that finds a collection of exclusive feature sets. \Revision{\citet{techapanurak2020hyperparameter} train their model using the softmax of the scaled cosine similarity scores in the loss function, while our approach uses the raw cosine similarity scores. This choice to use the raw scores, although seemingly minor, makes a fundamental difference in learning exclusive feature sets.}

\begin{table}[t]
\caption{Exclusivity metric across 5 randomly sampled classes.}
\label{exclusivity_metric}
\captionsetup{position=top}
    \subfloat[\centering Standard linear layer method]
    {{\scalebox{0.9}{
    \setlength\tabcolsep{1.5pt}
    \begin{tabular}{|c||c|c|c|c|c|}
        \hline
         classes & pullover & coat & bag & dress & boot
          \\\hline \hline
        pullover   &  0  & 0.73  & 0.68  & 0.58  & 0.74
        \\\hline
        coat   & 0.72  & 0  &  0.75  & 0.76  & 0.73
        \\\hline
        bag   & 0.68  & 0.75  & 0  & 0.69  & 0.68
         \\\hline
        dress   & 0.57  & 0.76  & 0.69  & 0  & 0.71
         \\\hline
        boot   & 0.74  & 0.73  & 0.68  & 0.71  & 0
        \\ \hline
        \end{tabular}}}}
        \hfill
      \subfloat[\centering Scaled cosine similarity method (Baseline)]{{
    \scalebox{0.9}{
    \setlength\tabcolsep{1.5pt}
    \begin{tabular}{|c||c|c|c|c|c|}
        \hline
         classes & pullover & coat & bag & dress & boot
          \\\hline \hline
        pullover   &  0  & 0.74  & 0.98  & 0.95  & 1.0
        \\\hline
        coat   & 0.74  & 0  &  0.98  & 0.76  & 1.0
        \\\hline
        bag   & 0.98  & 0.98  & 0  & 1.0  & 1.0
         \\\hline
        dress   & 0.95  & 0.76  & 1.0  & 0  & 1.0
         \\\hline
        boot   & 1.0  & 0.97  & 1.0  & 1.0  & 0
        \\ \hline
        \end{tabular}}}}
        \hfill
     \subfloat[\centering Cosine normalization with raw cosine similarity method (ours)]{{
     \scalebox{0.9}{
     \setlength\tabcolsep{1.5pt}
    \begin{tabular}{|c||c|c|c|c|c|}
        \hline
        classes & pullover & coat & bag & dress & boot
          \\\hline \hline
        pullover   &  0  & 1.0  & 1.0  & 1.0  & 1.0
        \\\hline
        coat   & 1.0  & 0  &  1.0  & 1.0  & 1.0
        \\\hline
        bag   & 1.0  & 1.0  & 0  & 1.0  & 1.0
         \\\hline
        dress   & 1.0  & 1.0  & 1.0  & 0  & 1.0
         \\\hline
        boot   & 1.0  & 1.0  & 1.0  & 1.0  & 0
        \\ \hline
        \end{tabular}}}}
\end{table}

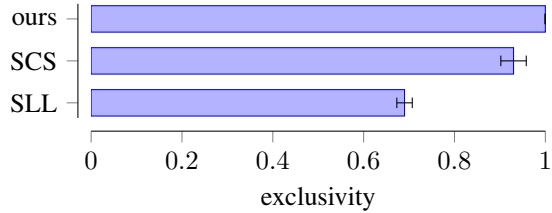
\begin{figure}[t]
\centering
    \begin{tikzpicture}
    \begin{axis} [
        axis x line*=none,
        axis y line*=none,
        axis line shift=5pt,
        xbar, xmin=0, xmax=1.0,
        width=3in,
        height=3.1cm,
        bar width=10pt,
        enlarge y limits=0.18,
        xlabel={exclusivity},
        xtick align=outside,
        symbolic y coords={SLL,SCS,ours},
        ytick=data,
    ]
    \addplot+ [
        error bars/.cd,
            x dir=both,
            x explicit,
            error bar style={color=black},
    ]
    coordinates {
        (0.69,SLL) +- (0.017, 0)
        (0.93,SCS) +- (0.028, 0)
        (1.0,ours) +- (0.001,0)
    };
    \end{axis}
    \end{tikzpicture}
    \caption{Exclusivity averaged across 10 experiments of 5 randomly sampled classes from the FMNIST dataset comparing standard linear layer method, scaled cosine similarity method and our method. SSL represents the standard linear layer method and SCS is the baseline scaled cosine similarity method. Error bars denote standard errors.}
    \label{fig:avg_exclusivity}
\end{figure}

 The architecture and hyperparameters used for the experiment are the same as the ones used for the novelty detection experiments with FMNIST (see Appendix~\ref{sec: implementation_details}) and are kept consistent across the three different methods. We also ensure that the randomly sampled classes for the 10 trials are the same for all three methods to ensure a fair comparison.

 \section{Using Zhou et al.'s exclusive sparsity as an alternative}
 \label{sec:ExclusiveSparsityFeatureSets}

Here, we compare the degree of exclusivity in the feature sets when the model is trained using the exclusive sparsity regularizer  of~\cite{Zhou2010ExclusiveLF} in comparison with our cosine normalization method.  Notably, Zhou et al.'s regularizer has a very different interpretation of exclusivity to what we use in this work:  it encourages {\em individual} features to be exclusive to a class (limiting transfer), where as our approach encourages {\em sets} of features to be exclusive (unique) to a class. We use the exclusive sparsity regularizer as defined by~\cite{Zhou2010ExclusiveLF}:
\begin{equation}
        \label{equ:ExclusiveSparsity}
        \Omega_{E} (\weights^{l}) = \frac{1}{2}\sum_{g} {\| \weights_{g}^{l} \|}_{1}^2 = \frac{1}{2}\sum_{g}\biggl(\sum_{i} \| \weights_{[g,i]}^{l} \| \biggr)^2\enspace.
    \end{equation}
To learn the model we combine the exclusive sparsity regularizer with the standard cross entropy loss:
\begin{equation}
     \label{equ:Loss_exl}
    \mathcal{L}(\weights) = \mathcal{L}_{CE} + \gamma\sum_{l}\Big(\mu^{(l)}\Omega_{E} (\weights^{l})\Big)\enspace.
\end{equation}
The parameter $\gamma$ controls the effect of regularization and is set to $0.001$. We set $\mu^{(l)} = \frac{l-1}{L-1}$ to encourage less exclusivity at the lower layers and more exclusivity at the higher layers, with $\mu^{(1)}=0$ to $\mu^{(L)}=1$ at the outermost layers. We trained the DNN on five randomly sampled classes from the FMNIST dataset and measured the exclusivity of the high-level feature activations using Equation~\ref{equ:activationSimilarity}, similar to the experiment in Appendix~\ref{sec:CosineNormalizationInducesExclusiveSets}. The architecture and hyperparameters are same as  for the novelty detection experiments with FMNIST (see Appendix~\ref{sec: implementation_details}). Figure~\ref{fig:exclsp_avg} shows aggregated results over ten random seeds, comparing the Zhou et al.'s exclusive sparsity method to our method, clearly demonstrating that our method results in more exclusive high-level feature sets.

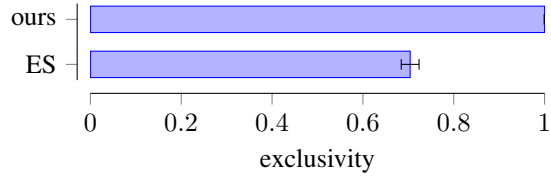
\begin{figure}[h]
\centering
    \begin{tikzpicture}
    \begin{axis} [
        axis x line*=none,
        axis y line*=none,
        axis line shift=5pt,
        xbar, xmin=0, xmax=1.0,
        width=3in,
        height=2.6cm,
        bar width=10pt,
        enlarge y limits=0.35,
        xlabel={exclusivity},
        xtick align=outside,
        symbolic y coords={ES,ours},
        ytick=data,
    ]
    \addplot+ [
        error bars/.cd,
            x dir=both,
            x explicit,
            error bar style={color=black},
    ]
    coordinates {
        (0.704,ES) +- (0.0196, 0)
        (1.0,ours) +- (0.001, 0)
    };
    \end{axis}
    \end{tikzpicture}
    \caption{Exclusivity averaged across 10 experiments of 5 randomly sampled classes from the FMNIST dataset comparing Zhou et al.'s Exclusive Sparsity (ES) method and cosine normalization (ours) method for generating exclusive feature sets. Error bars denote standard errors.}
    \label{fig:exclsp_avg}
\end{figure}

\section{Experimental analysis of the effects of group sparsity}
\label{sec:additionalOODExp}

\Revision{We ran additional novelty detection experiments to analyze the effects of group sparsity on OOD detection by varying both the group sparsity regularization weight $\alpha$ and the layer-specific mixing weight $\mu^{(l)}$ in Equation~\ref{equ:Loss}.  These experiments followed the same procedure and used the same hyperparameters as the within-datasets experiments of Section~\ref{sec:detect-exp}, with our approach divided into two conditions:  (1) SHELS with decreasing group sparsity as the layers increase by setting $\mu^{(l)}=\frac{l-1}{L-1}$ (layer-wise), and (2) SHELS with a constant amount of group sparsity at each layer by setting $\mu^{(l)}=0$ (constant).  Additionally, we varied $\alpha$ on the interval $[0, 0.02]$ in increments of $0.001$ to examine the effects of increasing the contribution of the group sparsity term in the loss function.  The results are shown in Figure~\ref{fig:group-sparsity}.}

\begin{figure}[b!]
    \centering
    \begin{subfigure}{0.5\textwidth}
        \centering
        \includegraphics[height=2in,clip, trim=.05in .0in .15in .15in]{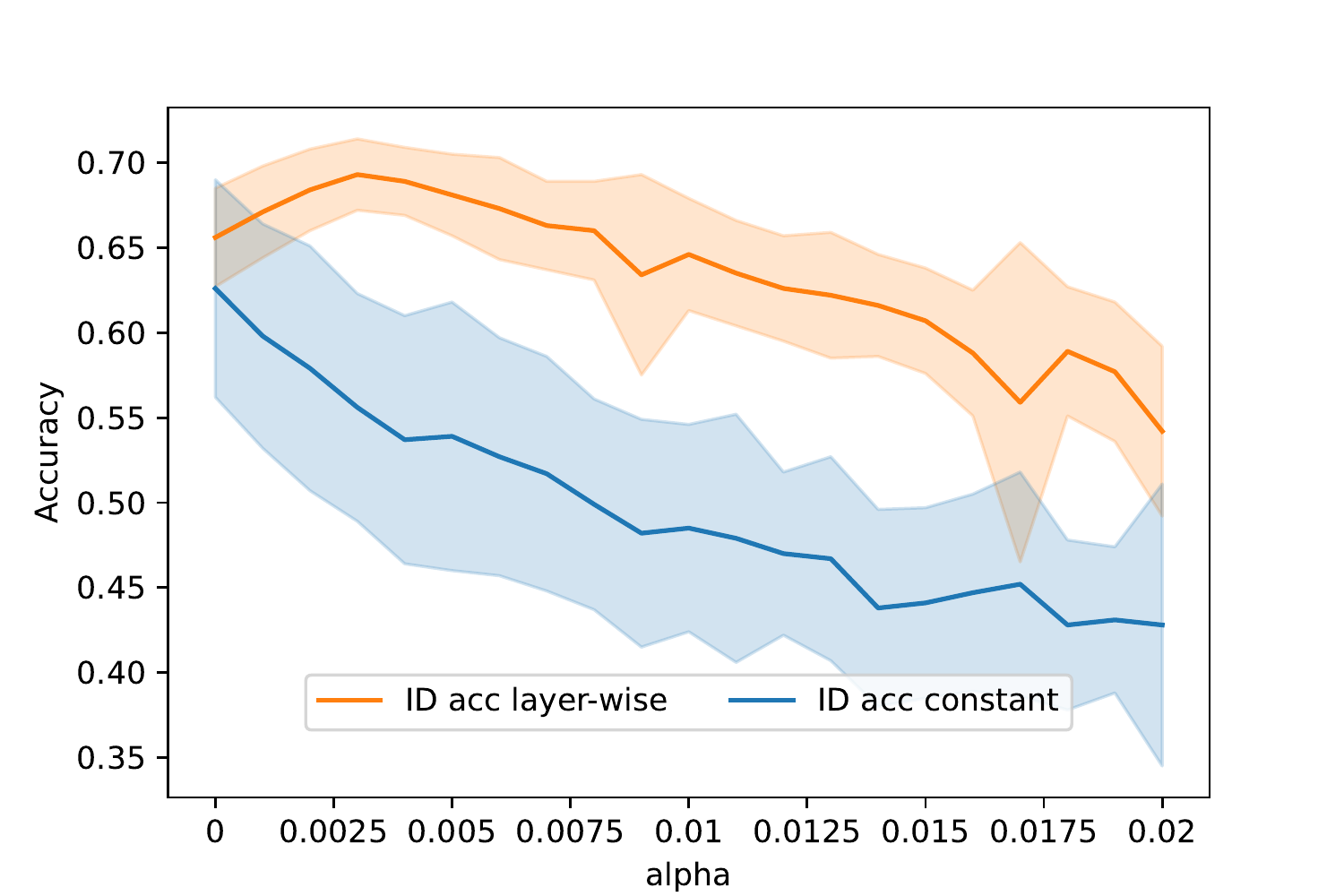}
        \caption{\Revision{ID classification accuracy for CIFAR10}}
         \label{fig:gs-id-cifar}
    \end{subfigure}%
    \begin{subfigure}{0.5\textwidth}
        \centering
        \includegraphics[height=2in,clip, trim=.05in .0in .15in .15in]{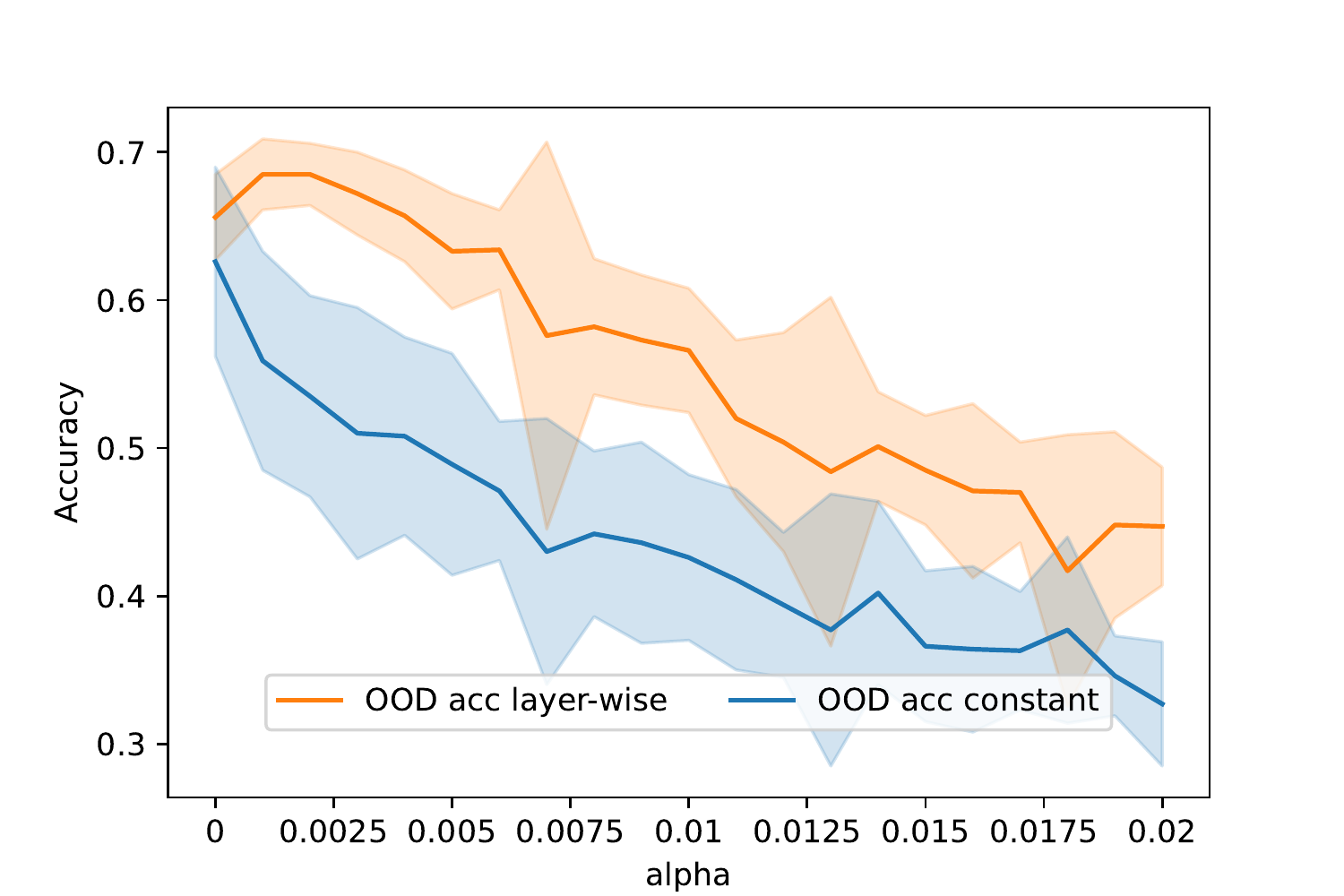}
        \caption{\Revision{OOD detection accuracy for CIFAR10}}
         \label{fig:gs-ood-cifar}
    \end{subfigure}%
    \\
  \begin{subfigure}{0.5\textwidth}
        \centering
        \includegraphics[height=2in,clip, trim=.05in .0in .15in .15in]{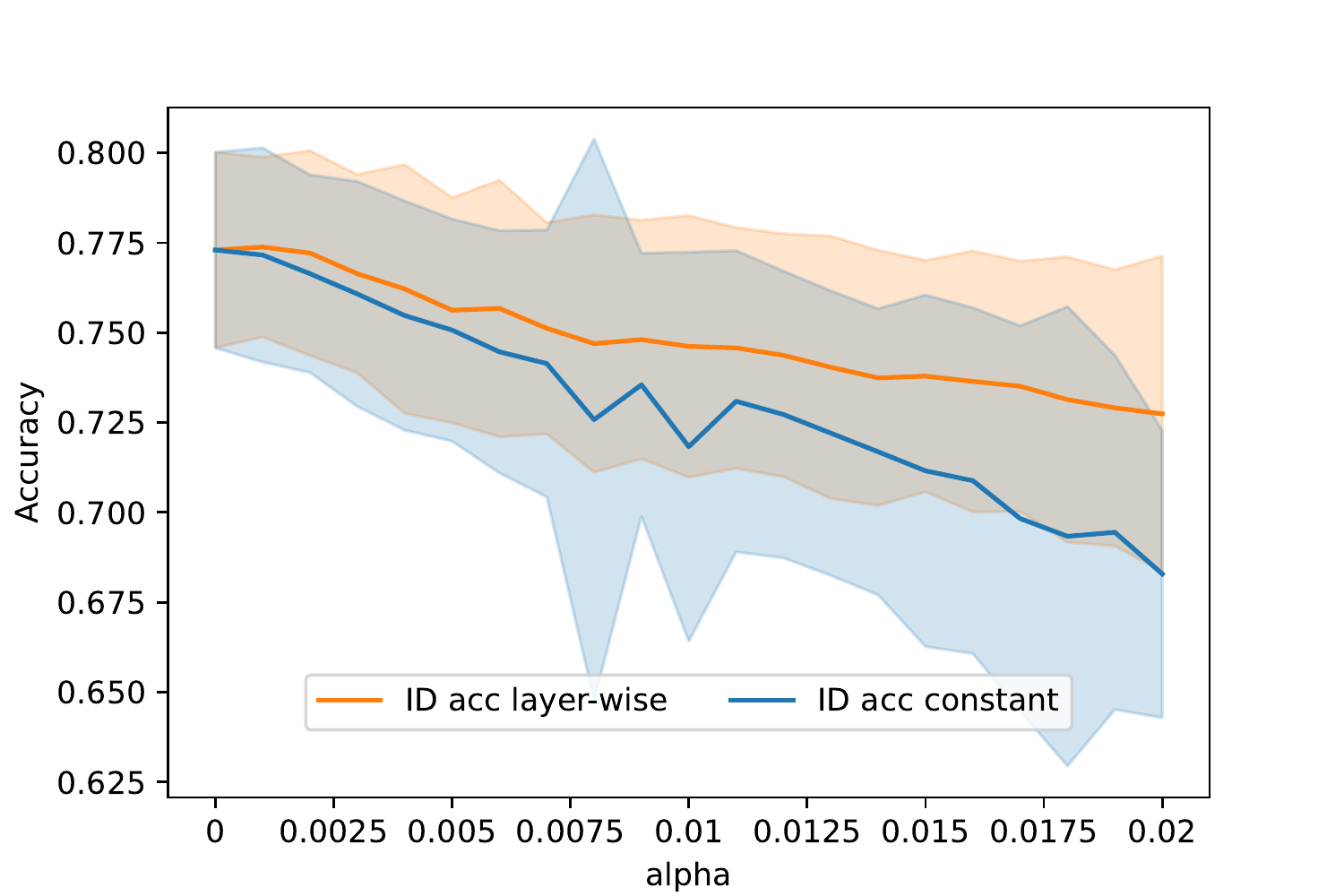}
        \caption{\Revision{ID classification accuracy for FMNIST}}
         \label{fig:gs-id-fmnist}
    \end{subfigure}%
    \begin{subfigure}{0.5\textwidth}
        \centering
        \includegraphics[height=2in,clip, trim=.05in .0in .15in .15in]{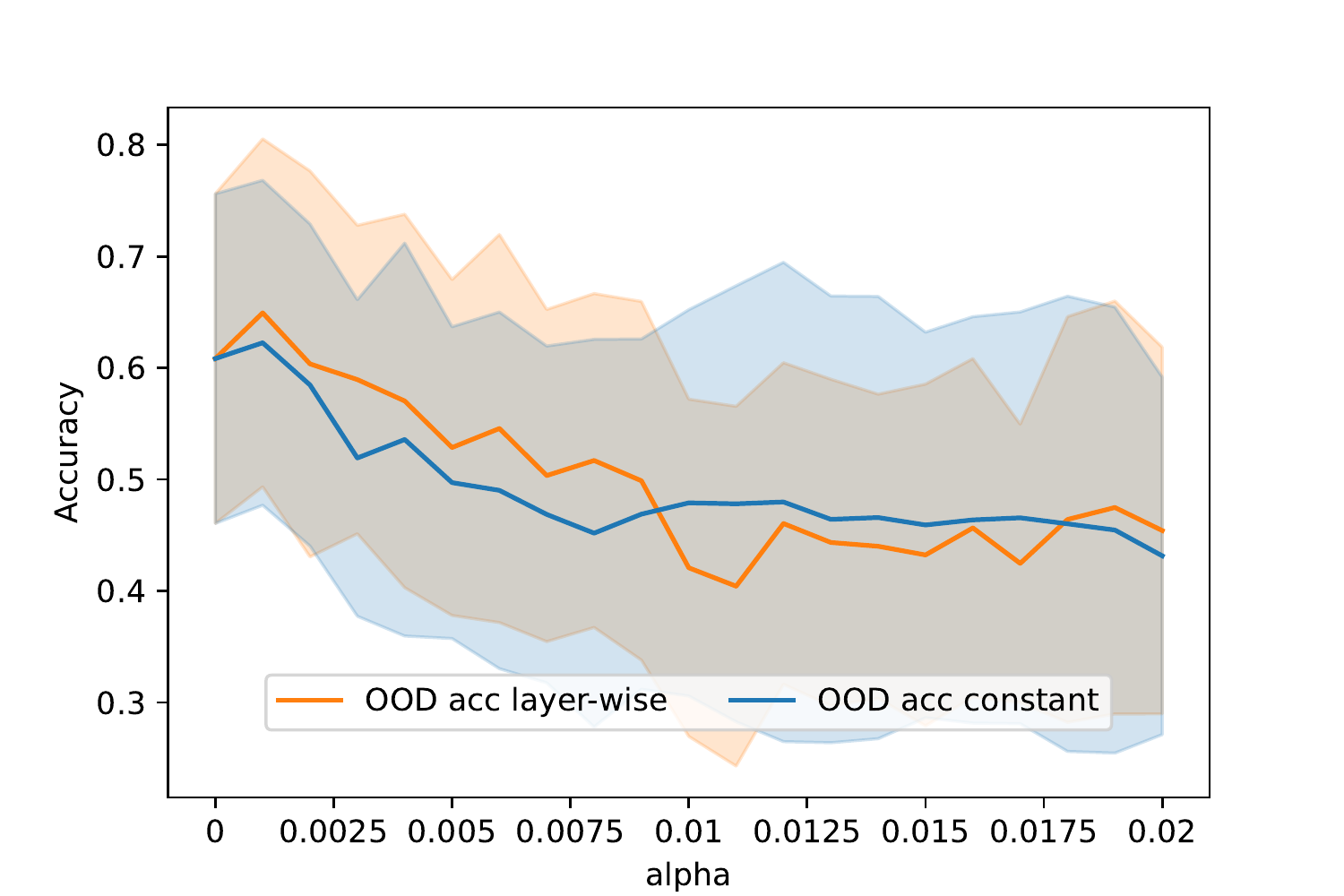}
        \caption{\Revision{OOD detection accuracy for FMNIST}}
         \label{fig:gs-ood-fmnist}
    \end{subfigure}%
    \caption{Analysis of the effects of group sparsity, showing mean $\pm$ standard deviation over ten trials.}
    \label{fig:group-sparsity}
\end{figure}

\Revision{The results show a consistent trend that increasing the weight of group sparsity regularization has a detrimental effect on both ID classification and OOD detection accuracy.  This is an important consideration for combining OOD detection and continual learning, as our continual learning ablation studies presented in Section~\ref{sec:accom-experiment} show that group sparsity regularization is essential to SHELS's effective continual learning performance.  As such, finding a way to incorporate group sparsity regularization into novelty detection models without negatively impacting their performance is a key contribution of our novelty detection approach.  Based on the above experiments, we use the decreasing layer-wise implementation for our method, which is responsible for the low-level shared feature learning of the SHELS representation, and a value of $\alpha=0.001$ for all additional experiments.}

\section{Supplemental illustrations of our approach}
\label{sec:SupplementalIllustrations}

Figure~\ref{fig:catastrophic_forgetting} illustrates forgetting based on model drift and negative transfer, and its interactions with node importance.

\renewcommand{\circlesize}{0.4cm}
\renewcommand{\childspace}{0.25}
\renewcommand{\parentspace}{0.75}
\renewcommand{\diagspace}{1}
\renewcommand{\legendspace}{0.08}
\renewcommand{\legendtext}{\scriptsize}
\definecolor{color1}{rgb}{0.3372549 , 0.70588235, 0.91372549} 
\definecolor{color2}{rgb}{0.9254902 , 0.88235294, 0.2       } 
\definecolor{color3}{rgb}{0.83529412, 0.36862745, 0.        } 
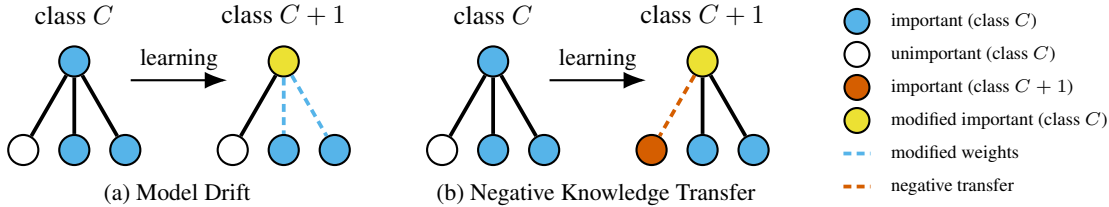
\begin{figure}[h!]
\centering
    \subfloat[Model Drift\label{fig:catastrophic_forgetting:model_drift}] {
        \begin{tikzpicture}[main/.style={draw, thick, circle, minimum size=0.4cm},label distance=0.15cm]
            \node[main] at (0,0) (D1) {};
            \node[main,fill=color1] (D2) [right=\childspace of D1] {};
            \node[main,fill=color1] (D3) [right=\childspace of D2] {};
            \node[main,fill=color1] (S1) [above=\parentspace of D2,label=above:class $C$] {};

            \draw[line width=0.5mm] (S1) -- (D1);
            \draw[line width=0.5mm] (S1) -- (D2);
            \draw[line width=0.5mm] (S1) -- (D3);

            \node[main] (D4) [right=\diagspace of D3] {};
            \node[main,fill=color1] (D5) [right=\childspace of D4] {};
            \node[main,fill=color1] (D6) [right=\childspace of D5] {};
            \node[main,fill=color2] (S2) [above=\parentspace of D5, label=above:class $C+1$] {};

            \draw[line width=0.5mm]                    (S2) -- (D4);
            \draw[line width=0.5mm,densely dashed,color1] (S2) -- (D5);
            \draw[line width=0.5mm,densely dashed,color1] (S2) -- (D6);

             \draw[-{Latex[length=3mm]},thick] ($(S1)+(0.75cm,-0.25cm)$) -- ($(S2)-(0.75cm,0.25cm)$) node [pos=0.5,auto] {\small learning};
        \end{tikzpicture}
    }
    \hfil
    \subfloat[Negative Knowledge Transfer\label{fig:catastrophic_forgetting:negative_knowledge_transfer}] {
        \begin{tikzpicture}[main/.style={draw, thick, circle, minimum size=0.4cm},label distance=0.15cm]
            \node[main] at (0,0) (D1) {};
            \node[main,fill=color1] (D2) [right=\childspace of D1] {};
            \node[main,fill=color1] (D3) [right=\childspace of D2] {};
            \node[main,fill=color1] (S1) [above=\parentspace of D2, label=above:class $C$] {};

            \draw[line width=0.5mm] (S1) -- (D1);
            \draw[line width=0.5mm] (S1) -- (D2);
            \draw[line width=0.5mm] (S1) -- (D3);

            \node[main,fill=color3] (D4) [right=\diagspace of D3] {};
            \node[main,fill=color1] (D5) [right=\childspace of D4] {};
            \node[main,fill=color1] (D6) [right=\childspace of D5] {};
            \node[main,fill=color2] (S2) [above=\parentspace of D5, label=above:class $C+1$] {};

            \draw[line width=0.5mm,densely dashed,color3] (S2) -- (D4);
            \draw[line width=0.5mm]                (S2) -- (D5);
            \draw[line width=0.5mm]                (S2) -- (D6);

            \draw[-{Latex[length=3mm]},thick] ($(S1)+(0.75cm,-0.25cm)$) -- ($(S2)-(0.75cm,0.25cm)$) node [pos=0.5,auto] {\small learning};
        \end{tikzpicture}
    }
    \hfil
    \raisebox{-0.5cm}{
    \begin{tikzpicture}[main/.style={draw, thick, circle},
                        minimum size=0.15cm]
        \node[main,white] at (0,0) (L6a) {};
        \draw[line width=0.5mm,densely dashed,color3] (L6a.west) -- (L6a.east);
        \node (L6b) [right=0.15 of L6a] {\legendtext negative  transfer};
        \node[main,white] (L5a) [above=\legendspace of L6a] {};
        \draw[line width=0.5mm,densely dashed,color1] (L5a.west) -- (L5a.east);
        \node (L5b) [right=0.15 of L5a] {\legendtext modified weights};
        \node[main,fill=color2] (L4a) [above=\legendspace of L5a] {};
        \node (L4b) [right=0.15 of L4a] {\legendtext modified important (class $C$)};
        \node[main,fill=color3] (L3a) [above=\legendspace of L4a] {};
        \node (L3b) [right=0.15 of L3a] {\legendtext important (class $C+1$)};
        \node[main] (L2a) [above=\legendspace of L3a] {};
        \node (L2b) [right=0.15 of L2a] {\legendtext unimportant (class $C$)};
        \node[main,fill=color1] (L1a) [above=\legendspace of L2a] {};
        \node (L1b) [right=0.15 of L1a] {\legendtext important (class $C$)};
    \end{tikzpicture}
    }
    \caption{Catastrophic forgetting caused by (a) Model Drift and (b) Negative Knowledge Transfer.}
\label{fig:catastrophic_forgetting}
\end{figure}

\section{Implementation details}
\label{sec: implementation_details}
\subsection{Architecture details}
\label{sec:arch_details}

The network architecture details for MNIST, FMNIST, SVHN, GTSRB and \Revision{CIFAR10 (within-dataset)} are given in Tables~\ref{tab:conv-arch} and~\ref{tab:linear-arch}. The architecture consists of six convolutional layers followed by linear and cosine layers. The convolutional layers for each dataset are the same as shown in Table~\ref{tab:conv-arch}, while the linear layers differ across datasets and are detailed for each of the datasets in Table~\ref{tab:linear-arch}. For across dataset experiments with CIFAR10 as ID, we use the PyTorch model for VGG16, pretrained on ImageNet, with the last linear layer replaced with cosine normalization.

\begin{table}[h]
    \begin{center}
     \caption{Convolutional(Conv) layers for MNIST, FMNIST, SVHN and GTSRB network architectures; $n_1 = 1$ for MNIST and FMNIST and $n_1 = 3$ for SVHN, GTSRB, and \Revision{CIFAR10 (within-dataset).}}
     \label{tab:conv-arch}
    \begin{tabular}{l|lllll}
        Layer  & Channel(In) & Channel(Out) & Kernel & Stride & Padding
        \\ \hline
        Conv-1  &       $n_1$ & $32$  &   $3 \times 3$ & $1$ & $1$ \\
        Conv-2  &       $32$    & $32$  &  $ 3 \times 3$   & $1$ & $1$ \\
        Maxpool &       $32$    & $32$  &   $2 \times 2$    & $2$
        \\
        Conv-3  &       $32$    & $64$ &    $3 \times 3$    & $1$ & $1$
        \\
        Conv-4  &       $64$ & $64$ &   $3 \times 3$    & $1$ & $1$
        \\
        Maxpool &       $32$ & $32$ &   $2 \times 2$    & $2$
        \\
        Conv-5  &       $64$ & $128$ &  $3 \times 3$   & $1$ & $1$
        \\
        Conv-6  &       $128$ & $128$ & $3 \times 3$  & $1$ & $1$
        \\
        Maxpool &       $32$ & $32$ &   $2 \times 2$    & $2$
        \\
        Flatten()
        \\ \hline

     \end{tabular}
    \end{center}
\end{table}

\begin{table}[h]
    \centering
\captionsetup{position=top}
     \caption{Linear and Cosine layers for MNIST, FMNIST, SVHN, GTSRB and \Revision{CIFAR10 (within-dataset)} network architectures; C is the total number of ID classes for the experiment.}
    \label{tab:linear-arch}

    \subfloat[MNIST and FMNIST]{{
    \begin{tabular}{l|l}
        Layer   & Out-layers
        \\ \hline
        Linear-1 & $256$
        \\
        Cosine  & $C$
        \\ \hline

     \end{tabular}}}
\qquad
    \subfloat[GTSRB and \Revision{CIFAR10 (within-dataset)}]{{
    \begin{tabular}{l|l}
        Layer   & Out-layers
        \\ \hline
        Linear-1 & $2048$
        \\
        Linear-2 & $512$
        \\
        Cosine  & $C$
        \\ \hline

     \end{tabular}}}
     \qquad
    \subfloat[SVHN]{{
    \begin{tabular}{l|l}
        Layer   & Out-layers
        \\ \hline
        Linear-1 & $1024$
        \\
        Linear-2 & $512$
        \\
        Cosine  & $C$
        \\ \hline
     \end{tabular}}}
\end{table}

\subsection{Hyperparamters}

We train models for all the experiments consisting of the datasets MNIST, FMNIST, GTSRB, and SVHN using the Adam optimzer with a learning rate of $0.0001$. For across-datasets experiments with the CIFAR10 dataset, we train the models using the SGD optimizer with an initial learning rate of $0.001$, decaying it by a factor of $2$ after every four epochs, momentum of $0.9$, and weight decay of $5\mathrm{e}{-4}$. \Revision{For within-dataset experiments with CIFAR10 we use the Adam optimzer with a learning rate of $0.0001$}. For the novelty detection experiments in Section~\ref{sec:detect-exp}, for all datasets, we use a batch size of $32$ and set $\alpha$ to $0.001$. The total number of training epochs varies for the different datasets: for MNIST and GTSRB we use $10$ epochs, for FMNIST and SVHN we use $20$ epochs, and for CIFAR10 we use $30$ epochs. For the novelty accommodation experiment and the combined novelty detection and accommodation experiment in Sections~\ref{sec:accom-experiment} and~\ref{sec:detect-accomm-exp} with the MNIST dataset, we use the same hyperparameters from the novelty detection experiment for MNIST. Additionally, we set $\beta$ to $2000$, decrease the learning rate by a factor of $10$, and use an OOD confidence score threshold of 0.75. For the novelty accommodation experiment and the combined novelty detection and accommodation experiment using FMNIST in Appendix~\ref{sec:additionaleClExp} and Section \Revision{\ref{sec:detect-accomm-exp}}, we use the same hyperparameters from the novelty detection experiment for FMNIST in addition to setting $\beta$ to $3000$, decreasing the learning rate by a factor of $2$, and using an OOD confidence score threshold of 0.3. \Revision{For the novelty accommodation experiments using GTSRB in Appendix~\ref{sec:additionaleClExp}, we use the same hyperparameters from the novelty detection experiment for GTSRB in addition to setting $\beta$ to $50000$, and keep the same learning rate.

To ensure a fair comparison with the AGS-CL baseline, for the AGS-CL experiments we set $\mu$ to be the same as $\alpha$ from the corresponding SHELS experiment to ensure the same amount of sparsity regularization. We also set $\lambda$ to be the same as $\beta$ from the corresponding SHELS experiment to ensure the same amount of weight freezing. However, we did find it necessary to tune the learning rate for AGS-CL: for MNIST we reduced the learning rate by a factor of $10$, for FMNIST we reduced the learning rate by a factor of $8$, and for GTSRB we reduced the learning rate by a factor of $2$. All AGS-CL models were trained using the Adam optimizer with an initial learning rate of $0.0001$ and a fixed batch size of 32.
For GDUMB experiments we use $K = 1\%$ of the training data, SGD optimizer, fixed batch size of 32, learning rates [0.05,0.0005] and SGDR schedule with $\mathrm{T0} = 1$, $\mathrm{Tmult} = 2$. For SHELS-GDUMB experiments we also set  $K = 1\%$ of the training data and use the Adam optimizer with a learning rate of $0.0001$ and a fixed batch size of 32. For both GDUMB and SHELS-GDUMB experiments, we use all the ID training data to warm start the model and use the replay buffer when we begin to incrementally learn OOD classes.}
\Revision{For the MAHA baseline, we set the threshold scores for each ID class to one standard deviation minus the mean of the scores of all the correctly identified samples from the training data of the corresponding class. Also, we set the same OOD confidence score threshold for MAHA as that used by the corresponding SHELS experiment.}

\subsection{Dataset details}

We detail the dataset splits for novelty detection experiments from Section~\ref{sec:detect-exp} in Tables~\ref{tab:datasplit1} and~\ref{tab:datasplit2}. For the within-dataset experiments, the OOD set comprises of a subset of default test set of that dataset corresponding to the OOD classes, while for the across-datasets experiments, the OOD set consists of the entire default test set of the OOD dataset.

\begin{table}[b]
    \centering
    \caption{Dataset sizes for novelty detection within-dataset experiments.}
    \label{tab:datasplit1}
    \begin{tabular}{c|c|c|c|c|c}
          & MNIST & FMNIST & SVHN & GTSRB & CIFAR10
          \\\hline
        train set & $\sim26000$    & $26400$ & $\sim31000$ & $\sim19000$ & $25000$ \\
        \hline
        val set   & $\sim3500$     & $3600$  & $\sim4000$  & $\sim2600$ & $3000$ \\
        \hline
        test set  & $\sim4800$     & $5000$  & $\sim12000$ & $\sim6900$ & $5000$ \\
        \hline
        OOD set   &$\sim5000$      & $5000$  & $\sim13000$ & $\sim5500$ & $5000$ \\
        \hline
    \end{tabular}
\end{table}

\begin{table}[t]
     \begin{minipage}[t]{0.58\linewidth}
    \captionsetup{width=0.95\linewidth}
     \caption{Dataset details for novelty detection across-datasets experiment.}
    \begin{tabular}{c|c|c|c|c|c}
          & MNIST & FMNIST & SVHN & GTSRB & CIFAR10
          \\\hline
        train set &$52800$ &$52800$ &$64466$  &$34502$  & $50000$ \\
        \hline
        val set &$7200$ &$7200$ & $8791$  & $4707$ & $6000$\\
        \hline
        test set &$10000$ &$10000$ & $26032$ & $12569$ & $10000$\\
        \hline
    \end{tabular}
    \label{tab:datasplit2}
    \end{minipage}
    \hfill
    \begin{minipage}[t]{0.38\linewidth}
    \centering
    \captionsetup{width=0.95\linewidth}
    \caption{Input image size in pixels for each dataset.}
    \begin{tabular}{|c|c|}
        \hline
        Dataset & Input Size
        \\ \hline
         MNIST &    $32 \times 32$ \\ \hline
         FMNIST &   $32 \times 32$  \\\hline
         SVHN &     $32 \times 32$  \\ \hline
         GTSRB &    $112 \times 112$ \\ \hline
         \Revision{CIFAR10 (within-dataset)} &  \Revision{$112 \times 112$}  \\\hline
         CIFAR10 (across-datasets) &  $224 \times 224$ \\ \hline
    \end{tabular}
    \label{tab:input_size}
    \end{minipage}
\end{table}

The input image sizes for each dataset compatible with the architectures described in Appendix~\ref{sec:arch_details} are given in Table~\ref{tab:input_size}. For the across-datasets novelty detection experiments, we resize the OOD data image size to the ID data image size.  For example, for the SVHN(ID) vs. CIFAR10(OOD) experiment, we resize the CIFAR10 images to be of the same size as SVHN, i.e., $32 \times 32$ to be compatible with the network architecture.

\section{Statistical testing details}
\label{sec:stat-test-details}

For all across-datasets novelty detection experiments, we performed unpaired Student's t-tests to determine whether the OOD detection method had a statistically significant effect on OOD detection accuracy, ID classification accuracy, combined accuracy, and OOD detection AUROC. This was evaluated on both our SHELS approach and the state-of-the-art OOD detection baseline \citep{techapanurak2020hyperparameter}.  We performed two-tailed tests to test for effects in either direction, making no prior assumption that either method would outperform the other.  We applied Holm-Bonferroni adjustments to protect against type I error from multiple comparisons, since we are testing four measures in each experiment, and use the adjusted p-values to test for statistical significance at a significance level of $\alpha=0.05$.  We use unpaired samples in these experiments, since each trial for each condition was trained independently with a different random seed.  Summary and test statistics for all across-datasets experiments are reported below in Table~\ref{tab:across-datasets-stats}.

For all within-dataset novelty detection experiments, we performed paired Student's t-tests to determine whether OOD detection method had a statistically significant effect on OOD detection accuracy, ID classification accuracy, combined accuracy, and OOD detection AUROC, evaluated between our SHELS approach and the state-of-the-art OOD detection baseline.  We again performed two-tailed tests to test for effects in either direction, making no prior assumption that either method would outperform the other. We applied Holm-Bonferroni adjustments to protect against type I error from multiple comparisons since we are again testing four measures per experiment, and use the adjusted p-values to test for statistical significance at a significance level of $\alpha=0.05$.  Unlike in the across-datasets experiments, we performed this statistical analysis using paired samples, by pairing samples with the same permutations of ID vs.~OOD classes.  For example, trial 1 for the MNIST dataset produced paired samples across both approaches which each consisted of ID classes $\{1,0,8,6,3\}$, trial 2 consisted of ID classes $\{5,7,8,3,9\}$ for both approaches, etc.  Summary and test statistics for all within dataset experiments are reported below in Table~\ref{tab:within-dataset-stats}.

\begin{table}[p]
\small
\caption{Summary statistics (mean $\pm$ std.~dev.) and Student's t-test results for across-datasets novelty detection.}
\label{tab:across-datasets-stats}
\centering
\begin{tabular}{c|ccccc}
Dataset & Measure                     & Baseline & SHELS & p-value & Adjustment \\ \hline
FMNIST & OOD detection acc           & $88.94\pm2.68$ & $87.45\pm5.48$ & $t(18)=0.77, \: p=.45$ & $\tilde{p}=.45$\\
vs. & ID classification acc       & $81.04\pm0.98$ & $76.75\pm0.44$ & $t(18)=12.62, \: p<.0001$ & $\tilde{p}<.001$\\
MNIST & Combined acc & $84.99\pm1.31$ & $82.12\pm2.7$ & $t(18)=3.02, \: p=.007$ & $\tilde{p}=.021$\\
& AUROC & $0.88\pm0.063$ & $0.77\pm0.064$ & $t(18)=3.87, \: p=.0011$ & $\tilde{p}=.0022$ \\ \hline
MNIST & OOD detection acc           & $96.51\pm2.22$ & $99.98\pm0.03$ & $t(18)=4.94, \: p=.0001$ & $\tilde{p}<.001$\\
vs. & ID classification acc       & $93.06\pm1.29$ & $84.65\pm0.42$ & $t(18)=19.6, \: p<.0001$ & $\tilde{p}<.001$\\
FMNIST & Combined acc & $94.78\pm1.15$ & $92.37\pm0.27$ & $t(18)=6.45, \: p<.0001$ & $\tilde{p}<.001$ \\
& AUROC & $0.99\pm0.002$ & $0.94\pm0.03$ & $t(18)=5.26, \: p<.0001$ & $\tilde{p}<.001$ \\ \hline
SVHN & OOD detection acc           & $78.49\pm4.14$ & $90.45\pm1.13$ & $t(18)=8.81, \: p<.0001$ & $\tilde{p}<.001$\\
vs. & ID classification acc       & $84.14\pm0.57$ & $77.11\pm0.67$ & $t(18)=25.27, \: p<.0001$ & $\tilde{p}<.001$\\
CIFAR10 & Combined acc & $81.31\pm2.03$ & $83.78\pm0.55$ & $t(18)=3.71, \: p=.002$ & $\tilde{p}=.004$\\
& AUROC & $0.90\pm0.007$ & $0.90\pm0.01$ & $t(18)=0, \: p=1$ & $\tilde{p}=1$ \\ \hline
SVNH & OOD detection acc           & $73.01\pm3.70$ & $84.88\pm1.49$ & $t(18)=9.41, \: p<.0001$ & $\tilde{p}<.001$\\
vs. & ID classification acc       & $84.14\pm0.57$ & $77.10\pm0.67$ & $t(18)=25.31, \: p<.0001$ & $\tilde{p}<.001$\\
GTSRB & Combined acc & $78.57\pm1.92$ & $80.99\pm0.91$ & $t(18)=3.6, \: p=.002$ & $\tilde{p}=.004$\\
& AUROC & $0.88\pm0.009$ & $0.89\pm0.008$ & $t(18)=2.63, \: p=.017$ & $\tilde{p}=.017$ \\ \hline
GTSRB & OOD detection acc           & $97.10\pm1.00$ & $97.08\pm1.08$ & $t(18)=0.04, \: p=.966$ & $\tilde{p}=1$\\
vs. & ID classification acc       & $86.01\pm1.36$ & $84.45\pm0.50$ & $t(18)=3.4, \: p=.003$ & $\tilde{p}=.009$\\
CIFAR10 & Combined acc & $91.56\pm0.96$ & $90.76\pm0.70$ & $t(18)=2.13, \: p=.047$ & $\tilde{p}=.094$\\
& AUROC & $0.96\pm0.01$ & $0.90\pm0.032$ & $t(18)=5.66, \: p<.0001$ & $\tilde{p}<.001$ \\ \hline
GTSRB & OOD detection acc           & $95.94\pm1.71$ & $97.07\pm2.77$ & $t(18)=1.1, \: p=.287$ & $\tilde{p}=.574$\\
vs. & ID classification acc       & $86.01\pm1.36$ & $84.45\pm0.50$ & $t(18)=3.4, \: p=.003$ & $\tilde{p}=.009$\\
SVHN & Combined acc & $90.97\pm1.40$ & $91.21\pm0.45$ & $t(18)=0.52, \: p=.612$ & $\tilde{p}=1$\\
& AUROC & $0.98\pm0.005$ & $0.94\pm0.018$ & $t(18)=6.77, \: p<.0001$ & $\tilde{p}<.001$ \\ \hline
CIFAR10 & OOD detection acc           & $94.84\pm1.97$ & $96.68\pm0.54$ & $t(18)=2.85, \: p=.011$ & $\tilde{p}=.044$\\
vs. & ID classification acc       & $75.53\pm0.46$ & $75.76\pm0.10$ & $t(18)=1.55, \: p=.14$ & $\tilde{p}=.28$\\
SVHN & Combined acc & $85.24\pm1.10$ & $86.21\pm0.25$ & $t(18)=2.72, \: p=.014$ & $\tilde{p}=.042$\\
& AUROC & $0.97\pm0.003$ & $0.95\pm0.05$ & $t(18)=1.26, \: p=.223$ & $\tilde{p}=.223$ \\ \hline
CIFAR & OOD detection acc           & $96.23\pm4.00$ & $97.34\pm0.43$ & $t(18)=0.87, \: p=.394$ & $\tilde{p}=1$\\
vs. & ID classification acc       & $75.68\pm0.46$ & $75.76\pm0.10$ & $t(18)=0.54, \: p=.598$ & $\tilde{p}=1$\\
GTSRB & Combined acc & $85.95\pm2.03$ & $86.45\pm0.36$ & $t(18)=0.77, \: p=.453$ & $\tilde{p}=1$\\
& AUROC & $0.93\pm0.002$ & $0.92\pm0.025$ & $t(18)=1.26, \: p=.223$ & $\tilde{p}=.892$ \\ \hline
\end{tabular}
\end{table}

\begin{table}[p]
\small
\caption{Summary statistics (mean $\pm$ std.~dev.) and Student's t-test results for within-dataset novelty detection.}
\label{tab:within-dataset-stats}
\centering
\begin{tabular}{c|ccccc}
Dataset                & Measure                     & Baseline & SHELS & p-value & Adjustment \\ \hline
\multirow{3}{*}{MNIST} & OOD detection acc           & $77.65\pm8.31$ & $97.95\pm1.26$ & $t(9)=8.14, \: p<.0001$ & $\tilde{p}<.001$\\
                       & ID classification acc       & $92.58\pm0.84$ & $85.07\pm0.69$ & $t(9)=27.21, \: p<.0001$ & $\tilde{p}<.001$ \\
                       & Combined acc & $85.11\pm4.04$ & $91.51\pm0.65$ & $t(9)=5.90, \: p=.0005$ & $\tilde{p}=.001$ \\
                       & AUROC & $0.91\pm0.06$ & $0.96\pm0.02$ & $t(9)=2.80, \: p=.0207$ & $\tilde{p}=.0207$ \\ \hline
\multirow{3}{*}{FMNIST} & OOD detection acc           & $43.64\pm12.76$ & $64.81\pm14.91$ & $t(9)=5.06, \: p=.0007$ & $\tilde{p}=.002$ \\
                       & ID classification acc       & $84.35\pm4.41$ & $77.68\pm2.70$ & $t(9)=11.60, \: p<.0001$ & $\tilde{p}<.001$ \\
                       & Combined acc & $83.99\pm5.75$ & $71.25\pm6.88$ & $t(9)=3.48, \: p=.0069$ & $\tilde{p}=.014$ \\
                       & AUROC & $0.74\pm0.09$ & $0.79\pm0.07$ & $t(9)=2.04, \: p=.072$ & $\tilde{p}=.072$ \\ \hline
\multirow{3}{*}{SVHN} & OOD detection acc           & $51.92\pm7.10$ & $69.24\pm4.75$ & $t(9)=15.40, \: p<.0001$ & $\tilde{p}<.001$ \\
                       & ID classification acc       & $87.49\pm2.06$ & $79.23\pm1.23$ & $t(9)=14.75, \: p<.0001$ & $\tilde{p}<.001$ \\
                       & Combined acc & $69.70\pm2.78$ & $74.23\pm2.27$ & $t(9)=7.13, \: p<.0001$ & $\tilde{p}<.001$ \\
                       & AUROC & $0.81\pm0.02$ & $0.82\pm0.03$ & $t(9)=1.31, \: p=.224$ & $\tilde{p}=.224$ \\ \hline
\multirow{3}{*}{\Revision{CIFAR10}} & \Revision{OOD detection acc}           & \Revision{$52.62\pm5.41$} & \Revision{$59.78\pm6.94$} & \Revision{$t(9)=7.95, \: p<.0001$} & \Revision{$\tilde{p}<.001$} \\
                       & \Revision{ID classification acc}       & \Revision{$67.83\pm3.20$} & \Revision{$67.12\pm2.85$} & \Revision{$t(9)=3.98, \: p=.0032$} & \Revision{$\tilde{p}=.006$} \\
                       & \Revision{Combined acc} & \Revision{$60.22\pm3.85$} & \Revision{$63.45\pm4.40$} & \Revision{$t(9)=7.23, \: p=.0001$} & \Revision{$\tilde{p}<.001$} \\
                       & \Revision{AUROC} & \Revision{$0.69\pm0.07$} & \Revision{$0.70\pm0.06$} & \Revision{$t(9)=0.33, \: p=.3305$} & \Revision{$\tilde{p}=.331$ }\\ \hline
\multirow{3}{*}{GTSRB} & OOD detection acc           & $76.35\pm13.06$ & $81.43\pm4.57$ & $t(9)=1.21, \: p=.257$ & $\tilde{p}=.514$ \\
                       & ID classification acc       & $88.71\pm2.01$ & $83.94\pm1.26$ & $t(9)=6.32, \: p=.0001$ & $\tilde{p}<.001$ \\
                       & Combined acc & $82.53\pm7.28$ & $82.68\pm2.24$ & $t(9)=0.07, \: p=.949$ & $\tilde{p}=1$ \\
                       & AUROC & $0.82\pm0.06$ & $0.88\pm0.06$ & $t(9)=3.36, \: p=.008$ & $\tilde{p}=.024$ \\ \hline
\end{tabular}
\end{table}

\section{Additional continual learning experiments}
\label{sec:additionaleClExp}

We ran additional novelty accommodation experiments on the FMNIST \Revision{and GTSRB} datasets.
We use four randomly sampled classes from FMNIST as ID and learn three out of the remaining six OOD classes. We learn a subset of the OOD classes for these experiments due to network capacity limitations of the architectures described in Section~\ref{sec:arch_details}.  In order to accommodate the full set of FMNIST classes, one could dynamically add nodes to the SHELS model, which we discuss briefly in Section~\ref{sec:conclusion} as future work. Figure~\ref{cl_results_fmnist} details the class-incremental learning performance with known class boundaries, following the approach described in Section~\ref{sec:novelty-accomm}. We include the same \Revision{baselines and} ablation studies. \Revision{Similar to the trends seen with MNIST, SHELS outperforms AGS-CL. The AGS-CL baseline performs similar to SHELS without group sparsity, further reinforcing that SHELS is more effective at inducing sparsity enabling the model to learn new features for future classes. As expected, GDUMB still serves as an upper performance bound for SHELS due to its use of replay data.}
The ablations show significant catastrophic forgetting without weight penalization (blue curves), and an inability to accommodate novel data due to the lack of network capacity without group sparsity (green curves).

\Revision{For GSTRB we use 23 randomly sampled classes as ID and the remaining 20 classes as OOD. We learn nine out of the 20 classes incrementally due to network capacity limitations as described before for the FMNIST experiments. We have also found that our model performs best when trained on relatively diverse in-distribution data. As such, our continual learning experiments train on an ID dataset containing multiple classes. In cases where this is not feasible, the model can be pre-trained with other available datasets and deployed for continual learning.

We demonstrate the performance of SHELS in GTSRB following the approach described in Section~\ref{sec:novelty-accomm} and also include the baselines AGS-CL and GDUMB for comparison as shown in Figure~\ref{fig:cl_results_gtsrb}. We observe similar trends as seen with MNIST and FMNIST: SHELS outperforms AGS-CL while GDUMB serves as an upper performance bound.

To create a more fair comparison between SHELS, which does not retain previous class data, and replay-based methods, such as GDUMB, we combined the two approaches into a new SHELS-GDUMB algorithm. This new method incorporates a size $K$ replay buffer, as described in the original GDUMB paper~\citep{prabhu2020gdumb}, into SHELS. Figure~\ref{fig:cl_results_gtsrb} shows that SHELS-GDUMB outperforms SHELS, since it has access to previous ID data while learning the new class, making it significantly easier to avoid catastrophic forgetting. This result demonstrates the SHELS representation's compatibility with replay. We observe that GDUMB still performs slightly better than SHELS-GDUMB, due to the weight penalty feature of SHELS, which penalizes updates to previously important nodes, consequently leading to network saturation.  However, note that the GDUMB algorithm on its own (i.e., without SHELS) cannot perform OOD detection, and is thus unsuitable for the combined novelty detection and accommodation paradigm.}


\begin{figure}[t]
    \centering
    \begin{minipage}{0.48\textwidth}
        \centering
        \includegraphics[height=2in,clip, trim=.2in .2in 0.4in .2in ]{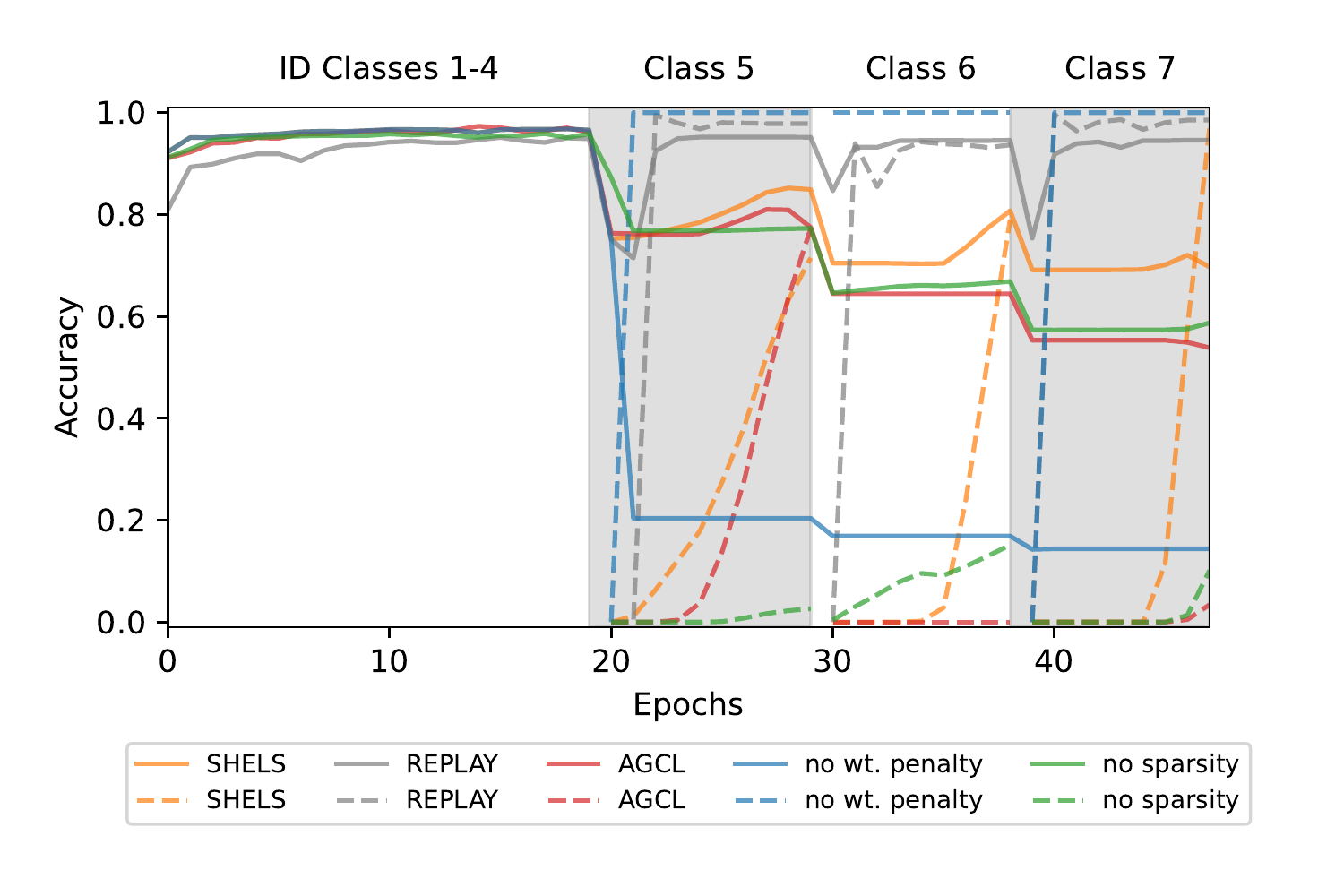}
        \caption{\label{cl_results_fmnist}Class incremental learning on FMNIST, pretrained on four classes, with three more classes learnt incrementally. Solid lines show overall test accuracy, dashes show test accuracy of the novel class only.}
    \end{minipage}
    \hfill
     \begin{minipage}{0.48\textwidth}
        \centering
        \includegraphics[height=2in,clip, trim=.15in .00in .15in .35in]{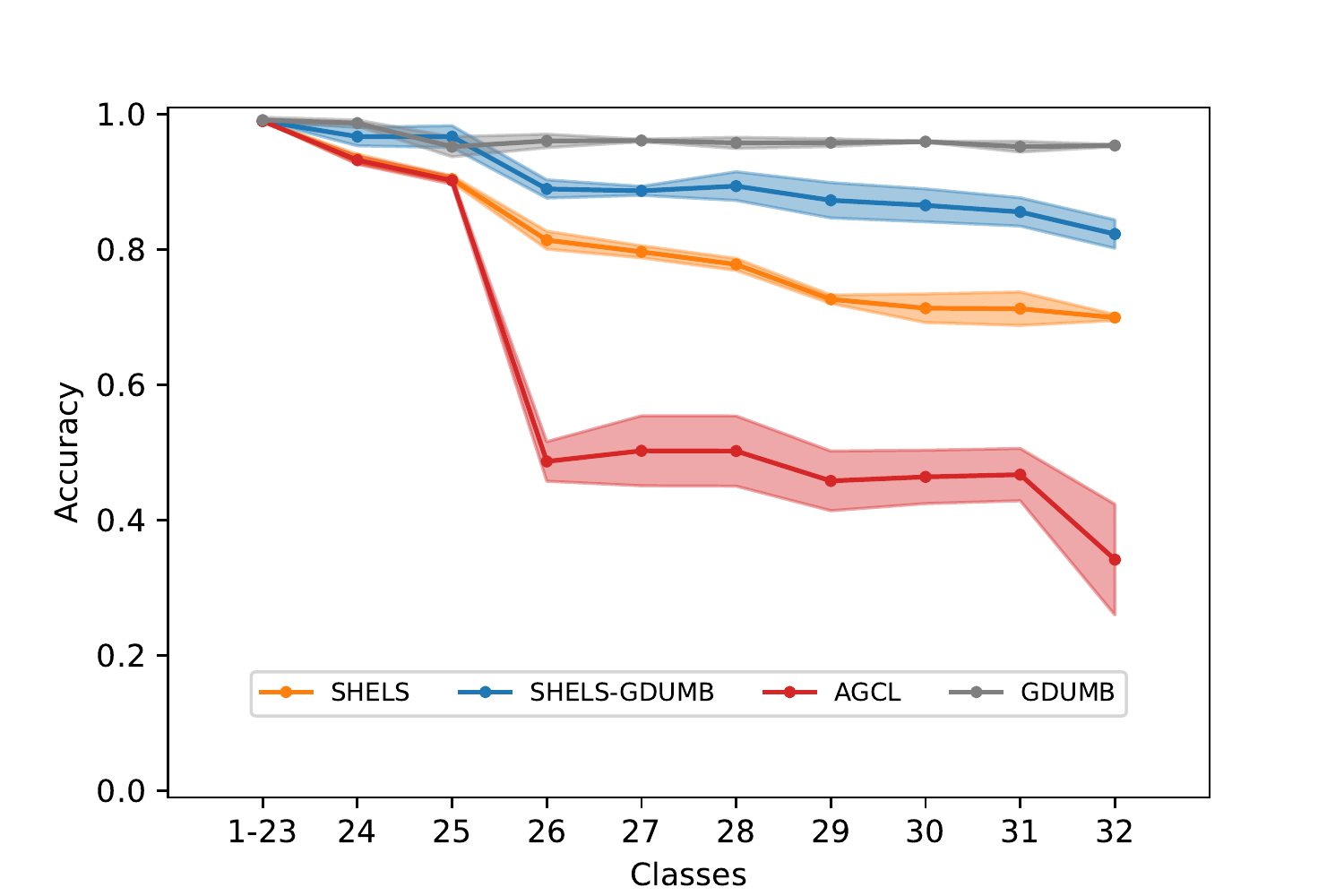}
        \caption{ \label{fig:cl_results_gtsrb}Class incremental learning on GTSRB, pretrained on 23 classes with nine classes learnt incrementally. Curves show overall test accuracy, showing mean $\pm$ standard deviation across three random seeds.}
    \end{minipage}
\end{figure}

\end{document}